\documentclass[final]{siamltex}
\usepackage{todonotes}

\usepackage{verbatim}
\usepackage{latexsym, graphicx, epsfig}
\usepackage{pgfplots}
\usepackage[notcite,notref]{showkeys}
\usepackage{showkeys}
\usepackage{url}
\usepackage{color}
\usepackage{array}
\usepackage{amsfonts,amsmath,amssymb,bm}
\usepackage{verbatim}
\usepackage{setspace}
\usepackage{multirow}
\usepackage{slashbox}
\usepackage{algorithm}
\usepackage{algpseudocode}
\usepackage[T1]{fontenc}
\usepackage{appendix} 

\newcommand{\beq}{\begin{equation}}
\newcommand{\eeq}{\end{equation}}
\newcommand{\beqq}{\begin{equation*}}
\newcommand{\eeqq}{\end{equation*}}
\newcommand{\beqas}{\begin{eqnarray*}}
\newcommand{\eeqas}{\end{eqnarray*}}
\newcommand{\bsp}{\begin{split}}
\newcommand{\eesp}{\end{split}}

\newcommand{\bfa}[1]{\boldsymbol{#1}}

\usepackage{amscd}
\usepackage{colordvi}
\usepackage{epstopdf}
\usepackage{hyperref}
\usepackage{subcaption}
\usepackage{cleveref}
\def\norm#1{\|#1\|}

\title{WGFINN{s}: Weak formulation-based GENERIC formalism informed neural networks}

\author{
Jun Sur Richard Park\thanks{Division of Big Data Science, Korea University, Sejong, 30019, Republic of Korea (\texttt{junsurpark@korea.ac.kr}).}
\and
Auroni Huque Hashim\thanks{Department of Mathematics, North Carolina State University, Raleigh, NC 27695, USA (\texttt{ahashim3@ncsu.edu}, \texttt{yeonjong\_shin@ncsu.edu}).}
\and
Siu Wun Cheung\thanks{Center for Applied Scientific Computing, Lawrence Livermore National Laboratory, Livermore, CA 94550, USA. (\texttt{cheung26@llnl.gov},
\texttt{choi15@llnl.gov}).}
\and
Youngsoo Choi\footnotemark[3]
\and
Yeonjong Shin\footnotemark[2]
}

\begin{document}

\maketitle 

\begin{abstract}
Data-driven discovery of governing equations from noisy observations remains a fundamental challenge in scientific machine learning. 
While GENERIC formalism informed neural networks (GFINNs) provide a principled framework that enforces the laws of thermodynamics by construction, their reliance on strong-form loss formulations makes them highly sensitive to measurement noise.
To address this limitation, we propose \emph{weak formulation-based GENERIC formalism informed neural networks} (WGFINNs), which integrate the weak formulation of dynamical systems with the structure-preserving architecture of GFINNs. WGFINNs significantly enhance robustness to noisy data while retaining exact satisfaction of GENERIC degeneracy and symmetry conditions.
We further incorporate a state-wise weighted loss and a residual-based attention mechanism to mitigate scale imbalance across state variables.
Theoretical analysis contrasts quantitative differences between the strong-form and the weak-form estimators. Mainly, the strong-form estimator diverges as the time step decreases in the presence of noise, while the weak-form estimator can be accurate even with noisy data if test functions satisfy certain conditions.
Numerical experiments demonstrate that WGFINNs consistently outperform GFINNs at varying noise levels, achieving more accurate predictions and reliable recovery of physical quantities.
\end{abstract}

\begin{keywords}
data-driven discovery, GENERIC formalism, interpretable scientific machine learning, weak formulation
\end{keywords}

\pagestyle{myheadings}
\thispagestyle{plain}




\section{Introduction}

The discovery of governing equations for dynamical systems from observed data is a cornerstone of modern predictive modeling. 
Over the past two decades, data-driven modeling has increasingly evolved by enabling the approximation of complex, high-dimensional nonlinearities to harness available data for scientific computing to complement or advance data-free approaches. 
Many novel methods have been proposed using classical data-driven and machine learning approaches, 
such as dynamic mode decomposition (DMD)
\cite{schmid2010dynamic,rowley2009spectral,tu2014dynamic,proctor2016dynamic}, 
operator inference (OpInf)
\cite{peherstorfer2016data,mcquarrie2021data,mcquarrie2023nonintrusive}, 
symbolic regression 
\cite{bongard2007automated,schmidt2009distilling}, 
and sparse regression
\cite{brunton2016discovering, kang2019ident}. 
Building on these methods, Bayesian extensions
\cite{raissi2018hidden, zhang2018robust, hirsh2022sparsifying}
and neural network (NN) approaches 
\cite{raissi2018hidden, wu2020data, bonneville2022bayesian, stephany2022pde, chen2021physics}
have become increasingly popular. 
These contemporary methods have been combined with dimensionality reduction \cite{lusch2018deep, champion2019data} 
and applied to flow dynamics modeling 
\cite{kim2019deep, zhang2019deep, wang2020reduced, cheung2020deep, wang2020efficient}.
The combination has been extended to the Latent Space Dynamics Identification (LaSDI) framework 
\cite{fries2022lasdi,he2023glasdi,bonneville2024gplasdi,park2024tlasdi, bonneville2024comprehensive}. 

While the aforementioned data-driven modeling strategies can achieve impressive accuracy on specific training datasets, 
they are pure data-driven methods, which learn governing equations and mimic observed trajectories without any prior knowledge of the underlying physics, and are therefore notorious for failing to generalize. 
When faced with regions of the state space where data is scarce, or when asked to project into the future, purely statistical models often produce physically impossible results, such as violating energy conservation or predicting negative entropy production. 
As an alternative, physics-informed data-driven modeling attempts to bridge this gap by embedding fundamental principles into the learning process. By leveraging physical laws, these models require significantly less data to reach high performance, maintain better stability, and exhibit superior generalization.
One of the most robust frameworks for this is the General Equation for Non-Equilibrium Reversible-Irreversible Coupling (GENERIC) formalism \cite{grmela1997dynamics,ottinger1997dynamics,ottinger2005beyond}. The formalism describes the evolution of a system beyond equilibrium by separating dynamics into reversible and irreversible components. These components are governed by specific generators that must satisfy symmetry and degeneracy conditions, effectively encoding the first and second laws of thermodynamics into the very structure of the differential equations, and can be imposed either softly or through hard-coded architectures. 
Soft constraints \cite{hernandez2021structure, hernandez2021deep, lee2021machine} utilize regularization terms in a loss function to penalize deviations from physical laws, but they do not guarantee that the resulting model will strictly obey these laws after training. 
In contrast, in the work of GENERIC-informed neural networks (GFINNs) \cite{zhang2022gfinns}, hard constraints involve designing neural network architectures that satisfy physical principles by construction and ensure that the learned system remains thermodynamically consistent. 

Yet, such advanced neural architectures still face two challenges: the quality of the data and the method of optimization. Most existing approaches rely on the strong form of the governing equations, meaning that the optimization process attempts to fit the pointwise time-derivatives of the system. In practical applications, these derivatives are rarely available directly and must be approximated from noisy measurement data using numerical differentiation. It is well-documented in numerical analysis that differentiation is an ill-posed operation that amplifies even minor noise, often leading to a total collapse of the identification process when the signal-to-noise ratio is low. 
To address this numerical instability, a separate lineage of research has focused on the weak form of system identification \cite{gurevich2019robust, pantazis2019unified, wang2019variational, messenger2021weak, messenger2021weak-jcp, bortz2023direct, tang2023weakident, tran2024weak, lopez2025weak, tang2025wg}. Instead of fitting derivatives, the weak formulation utilizes a Galerkin-based approach where the dynamics are integrated against compactly supported test functions. 
Through the integration by parts, the derivative is shifted from the noisy data onto a smooth, analytically defined test function, which are found to be effective in identifying dynamics under noise.
However, such approaches \cite{messenger2024weak,bortz2024weak} require a predefined library of carefully selected candidate functions and represent dynamics as a linear combination of them, which is the so-called dictionary method. If the true physical mechanism of a system is not represented in the library, the dictionary method fails.

The present work considers a non-dictionary approach in learning dynamical systems under noise.
In particular, we propose the Weak formulation-based GFINNs, namely, WGFINNs that combines the thermodynamic structures of the GENERIC formalism with the noise robustness of the weak formulation. While learning dynamics by means of neural networks (NNs) offers flexibility and expressive power in modeling, the presence of noise creates a new layer of challenges as it hinders learning processes towards correct dynamics. To overcome such challenges, we develop a weak formulation-based learning framework, which in principle can be applicable for any NN models, while we focus on using GFINNs.
The learning framework uses a state-wise weighted loss function and applies the state-wise residual-based attention (State-RBA) technique \cite{anagnostopoulos2024residual} to overcome the inherent scale imbalance across different state variables.
On a theoretical side, we provide a mathematical justification on the use of weak formulation in Proposition~\ref{prop:weak-form-motivation}, which reveals the trajectory-wise consistency. That is, the weak formulation loss yields a dynamical system that produces the same trajectory to the data.
By focusing on the univariate linear dynamics (parameter estimation), we analyze the errors of both the strong and the weak formulation-based losses.
For the strong form loss, we prove that the error diverges as the time-step size decreases to zero while there is a special time-step size such that the error is zero (perfect recovery).
For the weak form loss, we prove that there exists a test function which yields the weak form solution diverges as the support of the test function shrinks to zero, while converges to the underlying true parameter as the support gets larger.
While both forms yield divergence under similar conditions, their conditions for convergence differ significantly. See Section~\ref{sec:analysis} for details, especially Figure~\ref{fig:weak-form-illustration}.
Extensive numerical experiments demonstrate the robustness of the proposed method where the data is corrupted by different levels of noise. 
The learned dynamical system is not only thermodynamically consistent by design, satisfying energy conservation and entropy laws, but is also capable of being learned directly from noisy data.

The organization of the paper is as follows: In Section \ref{sec:prelim}, we provide the problem setup and preliminaries, introducing both the strong and weak formulation-based loss functions. Section \ref{sec:method} presents our proposed WGFINNs, detailing the learning framework along with the residual-based attention mechanism. Section \ref{sec:analysis} provides a mathematical analysis of parameter estimation that compares the quantitative differences between the strong and the weak form estimators. In Section \ref{sec:results}, we demonstrate the effectiveness of WGFINNs through several numerical experiments. Finally, Section \ref{sec:conclusion} concludes the paper with a summary and discussion of future work.
\section{Problem setup and preliminary}
\label{sec:prelim}
Let us consider the problem of learning a dynamical system of the following form:
\begin{equation}
\label{eq:strong}
\begin{split}
\dot{\bfa{x}}(t) &= \bfa{f}(\bfa{x}(t)), \quad \bfa{x} \in \Omega \in \mathbb{R}^d, \ t \in (0,T], \ 
\bfa{x}(0) = \bfa{x}_0,
\end{split}
\end{equation}
where $\bfa{f}: \mathbb{R}^d \to \mathbb{R}^d$ is the unknown vector-valued function. The goal is to construct an approximation to $\bfa{f}$ by utilizing a set of trajectory data.

For an initial state $\bfa{x}_0$, the solution to \eqref{eq:strong} is denoted by $\bfa{x}(t;\bfa{x}_0)$.
Given a set of initial states, $\{\bfa{x}_0^{(i)}\}_{i=1}^{N_\text{traj}}$, let $\bfa{x}_k^{(i)} = \bfa{x}(t_k;\bfa{x}_0^{(i)})$ be the solution at time $t_k$, where $t_k = k \Delta t$ with the timestep $\Delta t$ such that $T = K\Delta t$.
Then, the collection of solution trajectories is given by 
$\{ \{\bfa{x}_k^{(i)} \}_{k=0}^K \}_{i=1}^{N_\text{traj}}$.
In practice, however, one can only access to noisy measurements or trajectories; each observed state is 
\begin{equation} \label{def:noisy-data}
    \bfa{y}_k^{(i)} = \bfa{x}_k^{(i)} + \bfa{\epsilon}^{(i)}_k,
\end{equation}
where $\bfa{\epsilon}^{(i)}_k$'s are independent random noise vectors.

Our objective is to approximate $\bfa{f}$ from the noisy trajectory data $\{ \{\bfa{y}_k^{(i)} \}_{k=0}^K \}_{i=1}^{N_\text{traj}}$ by means of an NN model $\bfa{f}_{\text{NN}}(\cdot,\bfa{\theta})$.
Specifically, we are interested in a NN model that preserves the laws of thermodynamics.
See Section~\ref{sec:GFINNs}.

Let us consider the dynamical system defined through $\bfa{f}_{\text{NN}}(\cdot,\bfa{\theta})$:
\begin{equation}
\label{eq:strong-NN}
\begin{split}
\dot{\widehat{\bfa{x}}}(t) &= \bfa{f}_\text{NN}(\widehat{\bfa{x}}(t);\bfa{\theta}), \quad \widehat{\bfa{x}} \in \Omega \in \mathbb{R}^d, \ t \in (0,T], \ 
\widehat{\bfa{x}}(0) = \bfa{x}_0.
\end{split}
\end{equation}
The solution to \eqref{eq:strong-NN} is denoted by $\widehat{\bfa{x}}(t;\bfa{x}_0,\bfa{\theta})$ whose initial state is $\bfa{x}_0$ and the dependency of the NN parameter $\bfa{\theta}$ is explicitly written.
We consider the one-step evolution from $\bfa{y}_{k-1}^{(i)}$ governed by \eqref{eq:strong-NN} which will be used in defining loss functions, and denote it by 
\begin{equation} \label{def:one-step-NNprediction}
    \widehat{\bfa{x}}_k^{(i)}(\bfa{\theta};\bfa{y}^{(i)}_{k-1}) := \widehat{\bfa{x}}(\Delta t;\bfa{y}_{k-1}^{(i)},\bfa{\theta}).
\end{equation}
Note that if $\bfa{\theta}$ is chosen to satisfy $\bfa{f}_\text{NN}(\cdot;\bfa{\theta}) = \bfa{f}$ and $\bfa{y}_{k-1}^{(i)} = \bfa{x}_{k-1}^{(i)}$ (i.e, noise-free), $\widehat{\bfa{x}}_k^{(i)}(\bfa{\theta})$ and $\bfa{x}_k^{(i)}$ should be identical.

\subsection{Strong formulation-based loss function}\label{sec:strong_form}
A popular approach to train the NN model $\bfa{f}_\text{NN}$ is to minimize the mean squared error (MSE) loss function based on the strong formulation of the dynamical system \cite{zhang2022gfinns,hernandez2021structure,lee2021machine}:
\begin{equation} \label{def:strong-form-loss}
\mathcal{L}_W^\text{strong}(\bfa{\theta}) = \frac{1}{N_{\text{traj}}} \sum\limits_{i=1}^{N_\text{traj}} \frac{1}{K} \sum\limits_{k=0}^{K} \frac{1}{d}\norm{\widehat{\bfa{x}}^{(i)}_k(\bfa{\theta};\bfa{y}^{(i)}_{k-1}) - \bfa{y}^{(i)}_k}_{W}^2,
\end{equation}
where $\bfa{y}^{(i)}_k$'s are noisy observation of states \eqref{def:noisy-data} and $\widehat{\bfa{x}}^{(i)}_k(\bfa{\theta};\bfa{y}^{(i)}_{k-1})$'s are corresponding NN predictions \eqref{def:one-step-NNprediction}.
Here, $\|\cdot\|_W$ is the $W$-norm defined by $\|x\|_W = \sqrt{x^\top Wx}$ where $W = \text{diag}(w_1^2, w_2^2, \ldots, w_d^2)$ whose diagonal entries are strictly positive.



While this strong formulation-based loss function \eqref{def:strong-form-loss} is popularly used and found to be effective when it comes to noiseless data, it fails miserably otherwise as illustrated, for example, in Figure~\ref{fig:LD_WGFINNs_vs_GFINNs_noise_example}. 

\subsection{Weak formulation-based loss function}
\label{sec:weak_form_setup}
The weak formulation-based approaches have been popularly used when it comes to handling noisy data.
In particular, \cite{tran2024weak,messenger2021weak} proposed a loss function based on the weak formulation, which was found to be robust against noise for learning dynamical systems. 
We note that the main difference between the current work and \cite{tran2024weak,messenger2021weak} is the modeling of the dynamical system. 
While the current work focuses on using NN models in constructing the dynamics, \cite{tran2024weak,messenger2021weak} follow the so-called dictionary approach which expresses the dynamical system as a linear combination of selective elements from a pre-defined dictionary or set.

Let $V$ be the space of smooth real-valued functions with compact support on $(0,T)$. Any element $\phi$ in $V$ is called a test function. 
By multiplying $\phi$ and integrating over the entire temporal domain on both sides of (\ref{eq:strong}), it follows from the integration by parts that 
\begin{equation}
\label{eq:weak}
-\int_0^T  \dot{\phi}(t) \bfa{x}(t) dt = \int_0^T \bfa{f}(\bfa{x}(t)) \phi(t) dt, \quad \forall \phi \in V,
\end{equation}
which is the weak formulation of \eqref{eq:strong}.

\begin{proposition} \label{prop:weak-form-motivation}
    Suppose that $\bfa{f}$ and $\bfa{f}_\text{NN}(\cdot;\bfa{\theta})$ are continuous functions.
    Let $\bfa{x}(t;\bfa{x}_0)$ and $\widehat{\bfa{x}}(t;\bfa{x}_0,\bfa{\theta})$ be the solution to 
    \eqref{eq:strong} and \eqref{eq:strong-NN}, respectively, with the same initial state $\bfa{x}_0$.
    Then, the following statement holds. 
    The two trajectories are identical, i.e., $\bfa{x}(t;\bfa{x}_0)=\widehat{\bfa{x}}(t;\bfa{x}_0,\bfa{\theta})$ for all $t \in [0,T]$
    if and only if 
    $\bfa{f}(\cdot)$ on the trajectory $\bfa{x}(t;\bfa{x}_0)$ matches with $\bfa{f}_\emph{\text{NN}}(\cdot)$ on $\widehat{\bfa{x}}(t;\bfa{x}_0,\bfa{\theta})$.
    Moreover, if the above is the case, we have
    \begin{equation} \label{eq:weak-form-loss-motivation}
        -\int_0^T  \dot{\phi}(t) \bfa{x}(t;\bfa{x}_0) dt = \int_0^T \bfa{f}_\emph{\text{NN}}(\bfa{x}(t;\bfa{x}_0);\bfa{\theta}) \phi(t) dt, \quad \forall \phi \in V.
    \end{equation}
\end{proposition}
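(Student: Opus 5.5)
We read the matching condition as the pointwise-in-time identity
\begin{equation*}
\bfa{f}(\bfa{x}(t;\bfa{x}_0)) = \bfa{f}_\text{NN}(\widehat{\bfa{x}}(t;\bfa{x}_0,\bfa{\theta});\bfa{\theta}) \quad \text{for all } t\in[0,T],
\end{equation*}
so that ``$\bfa{f}$ on the trajectory $\bfa{x}$ matches $\bfa{f}_\text{NN}$ on the trajectory $\widehat{\bfa{x}}$'' means the two vector fields agree when each is evaluated along its own solution. The proof rests on the integral form of the two ODEs. Because $\bfa{f}$ and $\bfa{f}_\text{NN}(\cdot;\bfa{\theta})$ are continuous, both solutions are $C^1$ on $[0,T]$. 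The fundamental theorem of calculus then gives
\begin{equation*}
\bfa{x}(t;\bfa{x}_0)=\bfa{x}_0+\int_0^t \bfa{f}(\bfa{x}(s;\bfa{x}_0))\,ds,\qquad
\widehat{\bfa{x}}(t;\bfa{x}_0,\bfa{\theta})=\bfa{x}_0+\int_0^t \bfa{f}_\text{NN}(\widehat{\bfa{x}}(s;\bfa{x}_0,\bfa{\theta});\bfa{\theta})\,ds .
\end{equation*}

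For the forward implication, assume the two trajectories coincide on $[0,T]$. Then their time derivatives coincide on $(0,T]$, and by one-sided differentiability also at $t=0$. Substituting \eqref{eq:strong} and \eqref{eq:strong-NN} for these derivatives gives the matching identity directly.

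For the converse, assume the matching identity holds for all $t$. Subtracting the two integral representations, the integrands are equal for every $s$, so the difference $\bfa{x}(t)-\widehat{\bfa{x}}(t)$ vanishes for every $t\in[0,T]$. This direction needs no uniqueness theory, since the common initial state and the equal right-hand sides fix both trajectories. The one subtlety is that continuity alone does not guarantee uniqueness of solutions. We therefore read the statement as being about the particular solutions named $\bfa{x}$ and $\widehat{\bfa{x}}$, and then the argument above goes through unchanged.

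For the ``moreover'' part, take \eqref{eq:weak}, which holds for every $\phi\in V$ by integration by parts using the compact support of $\phi$. In it, replace $\bfa{f}(\bfa{x}(t))$ by $\bfa{f}_\text{NN}(\widehat{\bfa{x}}(t);\bfa{\theta})$ using the matching condition. Then replace $\widehat{\bfa{x}}(t)$ by $\bfa{x}(t)$ using the identity of the trajectories, which yields \eqref{eq:weak-form-loss-motivation}.

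The main obstacle is interpretive rather than technical. One must fix the precise meaning of ``matches'', namely as the pointwise-in-time equality above, and note the uniqueness caveat. The rest consists of routine applications of the fundamental theorem of calculus and integration by parts.
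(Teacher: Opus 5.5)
Your proof is correct. Your forward implication and your ``moreover'' part are the same as the paper's. The converse takes a different route.

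The paper stays inside the weak formulation. It combines \eqref{eq:weak} with the matching condition to get $\int_0^T \dot{\phi}(t)\,\bigl(\bfa{x}(t)-\widehat{\bfa{x}}(t)\bigr)\,dt = 0$ for all $\phi\in V$. It then asserts $\dot{V}=V$ and invokes the fundamental lemma of the calculus of variations. You instead subtract the integral forms of the two ODEs, which forces the difference to vanish directly through the shared initial state $\bfa{x}_0$.

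Your route is more elementary, and it handles a point the paper glosses over. Since $\int_0^T \dot{\phi}\,dt = 0$ for every compactly supported $\phi$, the set $\dot{V}$ is only the zero-mean subspace of $V$, not $V$ itself. The weak-form argument alone, via the du Bois-Reymond lemma, therefore gives only that $\bfa{x}-\widehat{\bfa{x}}$ is constant. The common initial state, together with continuity, is needed to make that constant vanish, and that is exactly what your integral representation builds in.

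What the paper's route offers is conceptual. It shows that agreement of the weak-form identities over all test functions determines the trajectory, up to that constant. That is the link to the weak-form loss the proposition is meant to motivate.
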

\begin{proof}
    For ease of discussion, we omit the dependency of $\bfa{\theta}$ throughout the proof.
    Suppose that $\bfa{x}(t;\bfa{x}_0)=\widehat{\bfa{x}}(t;\bfa{x}_0)$ for all $t \in [0,T]$.
    It is then readily checked that 
    $\bfa{f}(\bfa{x}(t)) = \dot{\bfa{x}}(t) = \dot{\widehat{\bfa{x}}}(t) = \bfa{f}_\text{NN}(\widehat{\bfa{x}}(t))$,
    which proves one direction.

    Let us assume that $\bfa{f}(\bfa{x}(t;\bfa{x}_0)) = \bfa{f}_{\text{NN}}(\widehat{\bfa{x}}(t;\bfa{x}_0))$.
    It then follows from \eqref{eq:weak} that 
    \begin{align*}
        -\int_0^T  \dot{\phi}(t) \bfa{x}(t) dt = \int_0^T \bfa{f}(\bfa{x}(t)) \phi(t) dt
        = \int_0^T \bfa{f}_\text{NN}(\widehat{\bfa{x}}(t)) \phi(t) dt = -\int_0^T  \dot{\phi}(t) \widehat{\bfa{x}}(t) dt,
    \end{align*}
    for all $\phi \in V$.
    Since $\dot{V}:= \{\dot{\phi} | \phi \in V\} = V$ (the fundamental theorem of calculus), it follows from the fundamental lemma of calculus of variations that 
    $\bfa{x}(t) = \widehat{\bfa{x}}(t)$ for all $t \in [0,T]$.
    Hence, \eqref{eq:weak-form-loss-motivation} readily follows.
\end{proof}

Proposition~\ref{prop:weak-form-motivation} reveals an important equation \eqref{eq:weak-form-loss-motivation} that connects the data-driven dynamics \eqref{eq:strong-NN} and the target dynamics \eqref{eq:strong} through the weak formulation. It serves as a master equation for the weak formulation-based loss function \cite{tran2024weak,messenger2021weak}.
In addition, the result implies that one can expect to learn the unknown dynamics at least along trajectories. 
This offers a mathematical justification of the weak formulation in learning a dynamical system, which is absent in the literature to the best of our knowledge.

To make \eqref{eq:weak-form-loss-motivation} practical, one needs to approximate (1) the integrals and (2) the test function space $V$.
Following the standard approaches, we employ a quadrature rule to numerically compute the integral terms and construct a finite dimensional subspace $V_J \subset V$ spanned by $\{\phi_j\}_{j=1}^J$.

For a quadrature rule $\{s_q, w_q\}_{q=0}^{Q+1}$, the integral is approximated by
$\int_0^T \dot{\phi}(t)\bfa{x}(t)dt \approx \sum_{q} w_q \dot{\phi}(s_q)\bfa{x}(s_q)$.
This, however, requires the state values $\{\bfa{x}(s_q)\}_{q=0}^{Q+1}$ at every quadrature point, which may not be available in practice. 
Thus, the trapezoidal rule is popularly used as it requires a minimal point evaluation. 
Consider the following trapezoidal rule: $s_q = t^* + hq$ where $t^*=n_0\Delta t$ and $h=\bar{n}\Delta t$ for some integers $n_0, \bar{n}$ such that $s_{Q+1} \le T$,
and $w_q = h$ if $q \ne 0, Q+1$ and $\frac{1}{2}h$ if $q = 0, Q+1$.
Then, \eqref{eq:weak-form-loss-motivation} becomes 
\begin{align*}
    -\sum_{q=1}^Q h \dot{\phi}(s_q)\bfa{x}(s_q) = \sum_{q=1}^Q h\phi(s_q)\bfa{f}_{\text{NN}}(\bfa{x}(s_q);\bfa{\theta}),
\end{align*}
where the test function and its derivative vanish at $s_0$ and $s_{Q+1}$.
The weak form loss is then defined as a MSE loss over test functions and noisy trajectory data. That is, the NN model is trained to minimize the following loss:
\begin{equation}  \label{def:weak-form-loss}
\mathcal{L}_W^\text{weak}(\bfa{\theta}) = \frac{1}{J\cdot d\cdot N_{\text{traj}}} \sum\limits_{i=1}^{N_\text{traj}} \left\Vert \bfa{Y}^{(i)}\dot{\bfa{\Phi}} + \bfa{F}_{\text{NN}}^{(i)}(\bfa{\theta}) \bfa{\Phi} \right\Vert_{W}^2,
\end{equation}
where $\|M\|_{W}^2 := \sum_{j} \|M_{:,j}\|_W^2$ is a weighted Frobenius norm where $M_{:,j}$ is the $j$-th column of $M$, 
$\bfa{\Phi}, \, \dot{\bfa{\Phi}}$ are matrices whose $(q,j)$-components are defined by $\bfa{\Phi}_{q j} = h\phi_j(s_q)$ 
and $\dot{\bfa{\Phi}}_{q j} = h\dot{\phi}_j(s_q)$, respectively, and 
\begin{equation*}
\bfa{Y}^{(i)} = \begin{bmatrix}
    \bfa{y}_{\bar{n}}^{(i)} & \cdots & \bfa{y}_{\bar{n}Q}^{(i)} 
\end{bmatrix}, \quad 
\bfa{F}_{\text{NN}}^{(i)}(\bfa{\theta}) = 
\begin{bmatrix}
    \bfa{f}_{\text{NN}}(\bfa{y}_{\bar{n}}^{(i)};\bfa{\theta}) & \cdots & \bfa{f}_{\text{NN}}(\bfa{y}_{\bar{n}Q}^{(i)};\bfa{\theta})
\end{bmatrix}.
\end{equation*}

\subsubsection{Test functions}\label{sec:test_func}




The choice of test functions is crucial for the effectiveness of the weak-form approaches. Common choices in the literature include orthogonal test functions generated from a set of multiscale $C^{\infty}$ bump functions \cite{bortz2023direct,heitzman2025practical} and Hartley modulating functions \cite{heitzman2025practical,boulier2014algorithm}. See also \cite{tran2025weak}. In this work, we employ unimodal piecewise polynomial test functions used in \cite{messenger2021weak,tran2024weak,tang2023weakident,he2023glasdi,heitzman2025practical} due to their robustness to noise \cite{messenger2021weak,heitzman2025practical}, computational efficiency and flexibility in controlling both the support width and the degree of smoothness at the support endpoints.

Specifically, each test function is defined as
\begin{equation*}
\label{eq:test_func}
\phi(t) = \begin{cases}
    C(t-a)^p(b-t)^p &\text{if } t \in [a,b], \\ 
    0 &\text{otherwise,}
\end{cases}
\end{equation*}
where $a<b, \, (a,b)\subset [0,T]$. 
$p \geq 0$ is a non-negative integer determining the polynomial degree and boundary smoothness, which we refer to as the polynomial degree parameter.
In this work, we set $p \geq 2$ in order to ensure that the test function and its derivative vanish at the boundary points $a$, $b$.
$C$ is a normalization constant satisfying $\norm{\phi}_\infty = 1$. 
As $p$ gets larger, the decay is faster towards the endpoints of the support.

The test functions are characterized by three hyperparameters, $\ell, p, s$: the support width $\ell$, the polynomial degree $p$, and the overlap parameter $s \in [0,1]$. The support width $\ell$ denotes the number of time points in each test 
function's support. Once $\ell$ is determined, the overlap parameter $s$ controls the degree of overlap between consecutive test functions. Specifically, neighboring test functions share $\ell_{\text{overlap}} = \lfloor \ell \cdot (1 - 
\sqrt{1 - s^{1/p}})\rfloor$
time points in their supports. Larger values of $s$ result in greater overlap: $s = 0$ yields disjoint supports, while $s \to 1$ approaches 
complete overlap between neighboring functions. For detailed guidance on selecting these hyperparameters, we refer to \cite{messenger2021weak}. The number of test functions $J$ is then determined by 
\(
J = 1 + \left\lfloor (K+1 - \ell)/(\ell - \ell_{\text{overlap}}) \right\rfloor.
\)
In this work, the hyperparameters $\ell$, $p$, and $s$ are individually tuned for each experiment to optimize performance. The specific choices used in our numerical experiments are reported in Appendix~\ref{appendix:training_details}.

\subsection{GENERIC formalism informed neural networks}
\label{sec:GFINNs}
When no physical principles are available for a given system, a NN model $\bfa{f}_\text{NN}$ is set to one of the popular neural network architectures, such as feed-forward neural networks (FNNs), residual neural networks (ResNets) \cite{he2016deep}, recurrent neural networks (RNNs) \cite{pearlmutter1989learning,rumelhart1985learning}, Kolmogorov-Arnold networks (KANs) \cite{liu2024kan} to name just a few.
This approach is generally referred to as purely data-driven as no explicit physical or meaningful structures are imposed. 
Another approach is to incorporate any available prior knowledge into the NN model so that the model itself preserves the underlying structures while it is learned from data.
In this work, we focus on the latter case of the physics-informed data-driven approach where we focus on the laws of thermodynamics. In particular, we employ the GENERIC formalism informed neural networks (GFINNs) \cite{zhang2022gfinns,park2024tlasdi} for the NN modeling choice of $\bfa{f}_\text{NN}$ and briefly introduce it in this subsection.

The GENERIC formalism \cite{grmela1997dynamics,ottinger2005beyond} is a mathematical framework that integrates both conservative and dissipative systems, providing a comprehensive description of beyond-equilibrium thermodynamics. It defines the dynamics through the scalar-valued functions of the total energy ($E$) and the entropy ($S$) of the system, and the matrix-valued functions of the Poisson ($L$) and friction ($M$) matrices:
\begin{equation}
\label{eq:GENERIC}
\begin{split}
&\dot{\bfa{x}} = L(\bfa{x}) \nabla E(\bfa{x}) +M(\bfa{x}) \nabla S(\bfa{x}), \\
\textrm{subject to} \ \ &L(\bfa{x}) \nabla S(\bfa{x}) = M(\bfa{x}) \nabla E(\bfa{x}) = \bfa{0}, \\
&L(\bfa{x}) \ \textrm{is skew-symmetric, i.e.,} \ L(\bfa{x}) = -L(\bfa{x})^\top,\\
&M(\bfa{x}) \ \textrm{is symmetric} \ \& \ \textrm{positive semi-definite.} 
\end{split}
\end{equation}
These four functions should satisfy the so-called degeneracy conditions, which ensure the first and second laws of thermodynamics, i.e., the energy conservation $\frac{d}{dt}E(\bfa{x}) = 0$ and non-decreasing entropy $\frac{d}{dt}S(\bfa{x}) \geq 0$.
The conditions can be viewed as the underlying physical structures of the dynamical systems.

The GFINNs \cite{zhang2022gfinns} are a class of NN models that follow the GENERIC formalism, which exactly satisfy the degeneracy conditions by construction while also being sufficiently expressive to learn the underlying dynamics from data.
While readers can consult \cite{zhang2022gfinns} for more details, we briefly review the GFINNs.
The GFINNs consist of four NNs -- $E_{\text{NN}}$, $S_{\text{NN}}$, $L_{\text{NN}}$ and $M_{\text{NN}}$ -- that approximate the functions $E$, $S$, $L$, and $M$ in (\ref{eq:GENERIC}), respectively.
The construction of the NNs is based on the following result \cite[Lemma 3.6]{zhang2022gfinns}, which shows a way to enforce the degeneracy conditions by means of skew-symmetric matrices.
\begin{theorem}\label{lemma:skewQ}
Let $A_i \in \mathbb{R}^{d\times d}$ be a skew-symmetric matrix, for $i = 1,\dots, M$. For a differentiable scalar function $h(\cdot): \mathbb{R}^{d} \to \mathbb{R}$, we 
consider the matrix valued function $Q_h(\cdot): \mathbb{R}^{d} \to \mathbb{R}^{M \times d}$ whose $i$-th row is defined as $(A_i \nabla h(\cdot))^\top$. Then we have $Q_h(\bfa{x}) \nabla h(\bfa{x}) = \bfa{0}$ for all $\bfa{x} \in \mathbb{R}^{d}$. 
\end{theorem}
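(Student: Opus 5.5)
The claim is a direct componentwise computation, so I would argue entrywise rather than invoke any structural machinery. Fix $\bfa{x} \in \mathbb{R}^d$ and write $\bfa{g} := \nabla h(\bfa{x}) \in \mathbb{R}^d$, which exists since $h$ is differentiable. By definition, the $i$-th row of $Q_h(\bfa{x})$ is $(A_i \bfa{g})^\top$. Hence the $i$-th entry of the vector $Q_h(\bfa{x})\nabla h(\bfa{x}) \in \mathbb{R}^M$ is the scalar
\begin{equation*}
\bigl(Q_h(\bfa{x}) \nabla h(\bfa{x})\bigr)_i = (A_i \bfa{g})^\top \bfa{g} = \bfa{g}^\top A_i^\top \bfa{g}, \quad i = 1,\dots,M.
\end{equation*}

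The key step is that a skew-symmetric matrix defines a vanishing quadratic form. The quantity $\bfa{g}^\top A_i^\top \bfa{g}$ is a scalar, so it equals its own transpose:
\begin{equation*}
\bfa{g}^\top A_i^\top \bfa{g} = \bigl(\bfa{g}^\top A_i^\top \bfa{g}\bigr)^\top = \bfa{g}^\top A_i \bfa{g}.
\end{equation*}
On the other hand, skew-symmetry $A_i^\top = -A_i$ gives $\bfa{g}^\top A_i^\top \bfa{g} = -\bfa{g}^\top A_i \bfa{g}$. Comparing the two identities, $\bfa{g}^\top A_i \bfa{g} = -\bfa{g}^\top A_i \bfa{g}$. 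Therefore $\bfa{g}^\top A_i \bfa{g} = 0$, and so $\bfa{g}^\top A_i^\top \bfa{g} = 0$ as well.

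Since every entry of $Q_h(\bfa{x})\nabla h(\bfa{x})$ vanishes, the vector is $\bfa{0}$. Because $\bfa{x}$ was arbitrary, the identity holds for all $\bfa{x} \in \mathbb{R}^d$.

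There is no real obstacle here. The only thing to check is the bookkeeping: each row of $Q_h$ is a transposed column vector, so every entry of the product is a quadratic form in the skew-symmetric matrix $A_i$. The argument uses nothing beyond differentiability of $h$, which is needed only so that $\nabla h$ is defined.
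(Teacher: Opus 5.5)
Your proof is correct and matches the standard argument. The paper gives no proof of its own and imports the result from \cite[Lemma 3.6]{zhang2022gfinns}, where it rests on the same observation you make: each entry $(A_i\nabla h)^\top \nabla h$ is a quadratic form of a skew-symmetric matrix and therefore vanishes.
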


For a scalar-valued NN, $S_{\textnormal{NN}}$, which represents the entropy,
let $Q_{S_{\textnormal{NN}}}$ be the corresponding matrix-valued function, defined in Theorem \ref{lemma:skewQ}.
Then $L_{\textnormal{NN}}$ for the Poisson matrix is constructed by 
\begin{equation*}
\label{eq:ann_structure}
L_{\textnormal{NN}}(\bfa{x}) := (Q_{S_{\textnormal{NN}}}(\bfa{x}))^\top \, B_{\textnormal{NN}}(\bfa{x}) \, Q_{S_{\textnormal{NN}}}(\bfa{x}),
\end{equation*}
where $B_{\textnormal{NN}}(\bfa{x}): \mathbb{R}^{d} \to  \mathbb{R}^{M \times M}$ is skew symmetric.
The NN models for $E_{\text{NN}}$ and $M_{\text{NN}}$ are similarly constructed.
It then can be checked that the resulting GFINNs satisfy degeneracy conditions (\ref{eq:GENERIC}) exactly by construction. 



While GFINNs provide a principled and expressive framework for learning thermodynamically consistent dynamics by exactly enforcing the GENERIC structure, the training procedure has been based on the strong formulation of \eqref{def:strong-form-loss} with $W = I_d$ \cite{zhang2022gfinns,park2024tlasdi,he2026thermodynamically}.
As discussed earlier, such strong-form formulations are known to be sensitive to measurement noise.

\section{Weak formulation-based learning framework: WGFINNs}
\label{sec:method}

We propose a weak formulation-based learning framework that effectively trains a NN model from noisy trajectory data. 
While the framework applies to any NN models, we focus on GFINNs as it preserves the thermodynamic structures through the GENERIC formalism, which we refer to as the weak formulation-based GFINNs (WGFINNs).

The framework introduces the state-wise residual-based attention (State-RBA) \cite{anagnostopoulos2024residual} mechanism that facilitates scale-agnostic learning across all state variables. 
State-RBA is found to be effective, especially, when it comes to learning dynamical systems with different scales such as systems of thermodynamics.
For instance, mechanical quantities such as position and momentum may have significantly larger values compared to thermodynamic quantities such as entropy. Without proper weighting, standard loss functions tend to prioritize the reduction of errors in large-magnitude variables, potentially neglecting smaller but equally important variables during optimization. This issue is illustrated in Figure \ref{fig:LD_WGFINNs_vs_GFINNs_weight_example}, which compares the predictions obtained by WGFINNs trained with weighted and unweighted loss functions ($W = I$) on \textit{noise-free} data for a linearly damped system (Section \ref{sec:linearly_damped}). 
When using an unweighted loss, the entropy variable $S$ with small magnitude exhibits noticeable deviations from the ground truth, despite accurate predictions for the position $q$ and momentum $p$. In contrast, when state-wise weights are incorporated into the loss function, all three variables—$q$, $p$, and $S$—are learned with comparable accuracy for both methods. This demonstrates that appropriate state-wise weighting is essential to ensure balanced learning across all components of the state vector, particularly for variables that play a critical role in the thermodynamic consistency of the system.
\begin{figure}[!ht]
	\centering
 \includegraphics[width=\textwidth]{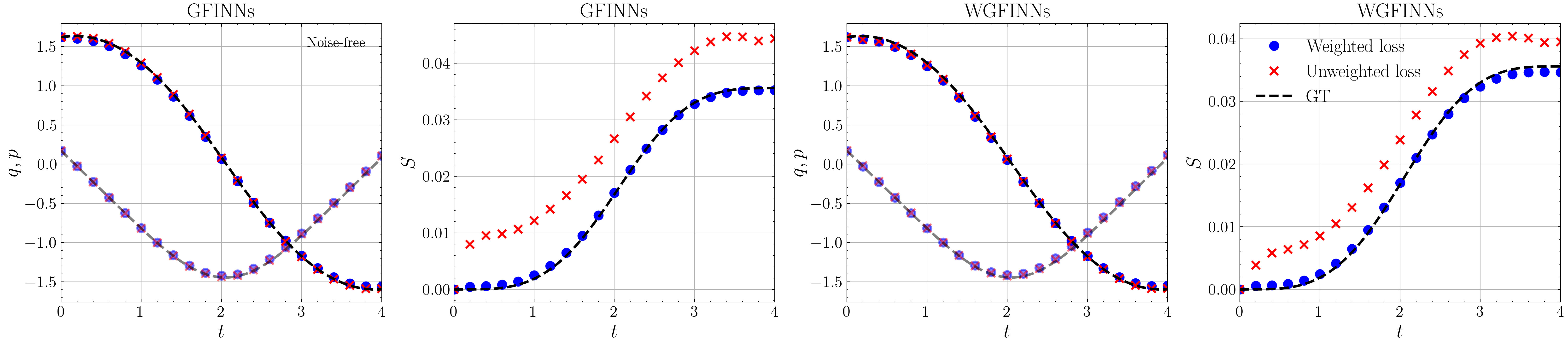}
  \caption{Example \ref{sec:linearly_damped}. Comparison of weighted and unweighted loss functions for noise-free data. Predictions by GFINNs and WGFINNs for all state variables.}
\label{fig:LD_WGFINNs_vs_GFINNs_weight_example}
\end{figure}

While State-RBA is applicable for both the strong and the weak form losses, it does not resolve issues of learning from noisy data.
It is the weak form loss that brings the noise-resilient feature in the learning process.
Figure~\ref{fig:LD_WGFINNs_vs_GFINNs_noise_example} illustrates the performance difference between the strong form loss (GFINNs) and the weak form loss (WGFINNs). 
It is clearly seen that GFINN prediction (red crosses) deviates substantially from the ground truth, while WGFINN (blue circles) yields accurate predictions throughout despite the presence of 5\% noise in the training data.
Note that both WGFINN and GFINN employ the identical NN architecture and the only difference comes from how they are trained.
Further the experiment details can be found in Section~\ref{sec:linearly_damped}. 
\begin{figure}[!ht]
	\centering
 \includegraphics[width=\textwidth]{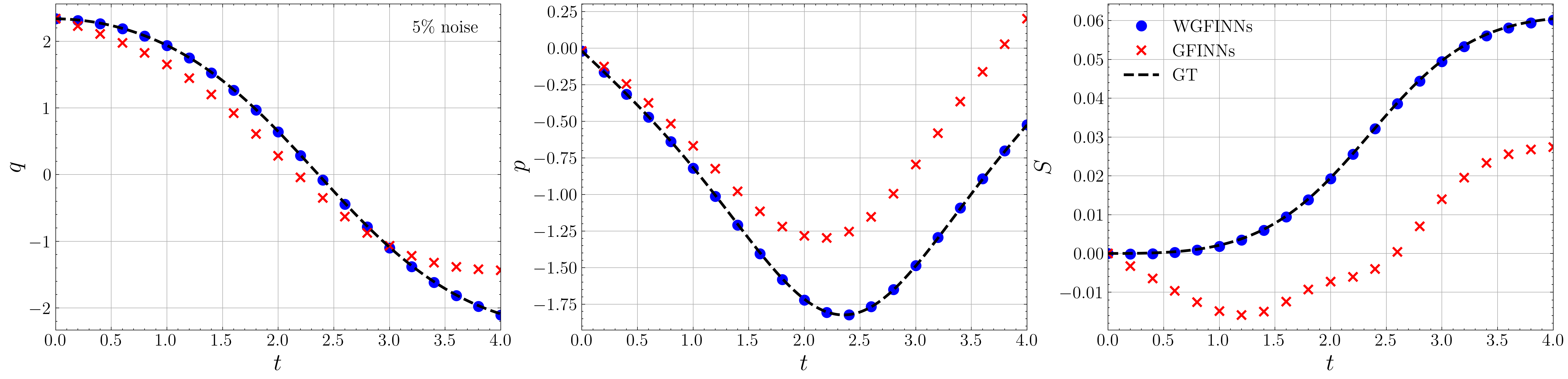}
  \caption{Example \ref{sec:linearly_damped}. Ground-truth (GT) trajectories and the corresponding predictions by WGFINNs and GFINNs with 5$\%$-noise corruption.}
\label{fig:LD_WGFINNs_vs_GFINNs_noise_example}
\end{figure}




\subsection{State-wise weighted loss and residual-based attention}
\label{sec:sw_rba}
Given a NN model of choice (this work uses GFINNs), the framework consists of two key components. One is to construct an appropriate weak-form loss function and the other is the use of State-RBA that dynamically adjusts the weights in the weak-form loss.
As the weak-form loss is already presented in Section~\ref{sec:weak_form_setup}, in what follows, we describe the State-RBA training scheme.



To ensure balanced learning across all state components, we employ the weighted norm $\|\cdot\|_W$ where the weight matrix is defined through the standard deviations of the training data. That is, $W = \text{diag}(\frac{1}{\sigma_1}, \frac{1}{\sigma_2}, \ldots, \frac{1}{\sigma_d})$, where $\sigma_\ell$ is the standard deviation of the $\ell$-th state variable across all trajectories and time steps, i.e.,
\begin{equation*}
\label{eq:std_l}
\sigma_\ell^2 = \frac{1}{N_{\text{traj}}(K+1)} \sum_{i=1}^{N_{\text{traj}}} \sum_{k=0}^{K} \left((\bfa{y}_k^{(i)})_\ell - \mu_\ell\right)^2, \
\mu_\ell = \frac{1}{N_{\text{traj}}(K+1)} \sum_{i=1}^{N_{\text{traj}}} \sum_{k=0}^{K} (\bfa{y}_k^{(i)})_\ell,
\end{equation*}
where $\ell = 1, \ldots, d$.

To further refine the state-wise weighting dynamically, we propose a state-wise RBA mechanism, originally proposed in \cite{anagnostopoulos2024residual}. 
State-RBA adaptively adjusts the weights during training based on the current residuals of each state variable, ensuring that all variables receive appropriate attention throughout the optimization process. 
Specifically, with the initial multiplier $\bfa{\lambda}^{0} = \bm{1}$ and an initial NN parameters $\bfa{\theta}^0$, the gradient descent algorithm updates them according to the following rules: For $k=1,\dots,$
\begin{align*}
\bfa{\lambda}^{k}
    &=
    \gamma \bfa{\lambda}^{k-1}
    +
    \eta^{\ast}
    \sqrt{W}\bfa{e}^{k-1}/\|\sqrt{W}\bfa{e}^{k-1}\|_{\infty}, \\
    \bfa{\theta}^{k} &= \bfa{\theta}^{k-1} - \eta \nabla_{\bfa{\theta}} \mathcal{L}^{\textrm{str}}_{W\textrm{diag}(\bfa{\lambda}^{k})}(\bfa{\theta}^{k-1}),
\end{align*}
where $\gamma \in (0,1]$ is a decay parameter, $\eta^{\ast}$ is the learning rate associated with the weighting scheme,
$\| \bfa{v} \|_{\infty}
=
\max\limits_{\ell = 1,\dots,d} |v_\ell|$ for a vector $\bfa{v} \in \mathbb{R}^d$, 
$\eta$ is the learning rate, and
$\text{str} \in \{\text{strong}, \text{weak}\}$. 
Here $\bfa{e}^k$ denotes the residual of the loss at $\bfa{\theta}^k$.
If the weak form loss is used, it is defined by
\begin{equation*}
\label{eq:RBA_ei}
\bfa{e}^k
:=
\frac{1}{N_{\mathrm{traj}}\,J}
\sum_{i=1}^{N_{\mathrm{traj}}}
\sum_{j=1}^{J}
\left|
\left(
\bfa{Y}^{(i)}\dot{\bfa{\Phi}} + \bfa{F}_{\text{NN}}^{(i)}(\bfa{\theta}^k) \bfa{\Phi} 
\right)_{:, j}
\right|,
\end{equation*}
and if the strong form loss is used, it is given by
\begin{equation*} \label{eq:RBA_ei_gfinns}
\bfa{e}^k
:=  \frac{1}{N_{\text{traj}}} \sum\limits_{i=1}^{N_\text{traj}} \frac{1}{K} \sum\limits_{k=1}^{K} \left| \widehat{\bfa{x}}^{(i)}_k(\bfa{\theta}^k;\bfa{y}^{(i)}_{k-1}) - \bfa{y}^{(i)}_k\right|.
\end{equation*}




Although the original GFINNs framework \cite{zhang2022gfinns} does not introduce the weighted loss or RBA, we apply them in Section \ref{sec:results} for fair comparison with WGFINNs.

\section{Analysis: Parameter estimation}\label{sec:analysis}
In this section, we present a mathematical analysis of learning a parameter in a one-dimensional dynamics from noisy trajectory data.
The focus is on identifying quantitative differences between the estimators from the strong and the weak form losses, and revealing a fundamental limitation (if any) of learning a dynamical system under noise.

For simplicity of discussion, let us consider a linear dynamical system of 
$\dot{x}=\lambda x$ with $\lambda \ne 0$ whose analytical solution is $x(t) = x(0)e^{\lambda t}$.
Since the univariate case is under consideration, a majority of notations used in this section will not use boldface.
For example, $\bfa{x} \to x$ and $\bfa{f}(\bfa{x}) \to f(x):=\lambda x$.

\subsection{Strong form loss}
Suppose that we are given a noisy trajectory $\{y_k\}_{k=0}^K$ where $y_k = x_k + \epsilon_k$, $x_k = x(k\Delta t)$ and $\epsilon_k$'s are independent random noise with mean zero and variance $\sigma^2$.
Let $f_\text{NN}(x;\theta) = \theta x$ be the model of choice where $\theta$ is to be learned from data.
Then, the learning task is equivalent to inferring the underlying system parameter $\lambda$ from $\{y_k\}$.

Consider the dynamical system $\dot{\widehat{x}}=\theta\widehat{x}$ that corresponds to \eqref{eq:strong-NN}.
After applying the forward Euler method, the one-step prediction \eqref{def:one-step-NNprediction} from $y_{k-1}$ is given by $\widehat{x}_k(\theta) = (I+\theta\Delta t) y_{k-1}$.

The strong form loss function is then given by
\begin{equation} \label{def:1D-strong-form-loss}
    \mathcal{L}^\text{strong}(\theta) = \frac{1}{K}\sum_{k=1}^K | \widehat{x}_k(\theta) - y_k|^2.
\end{equation}
Note that the $W$-norm is not used here as the univariate case is considered.
Theorem~\ref{thm:strong-form} shows that the strong form solution can not accurately approximate the target parameter $\lambda$ and will eventually diverge to infinite as $\Delta t \to 0$.
\begin{theorem} \label{thm:strong-form}
    Suppose that $\varepsilon_i$ be i.i.d. zero-mean and variance $\sigma^2$ random variables.
    Let $\theta_\emph{\text{strong}}^*(\Delta t)$ be the optimal solution that minimizes \eqref{def:1D-strong-form-loss}.
    Then 
    \begin{equation} \label{eqn:thm-strong-limit}
        \lim_{\Delta t \to 0, K\to \infty, \Delta t K = T} \Delta t\left[\theta_\emph{\text{strong}}^*(\Delta t) - \lambda\right]  \overset{\text{a.s.}}{=} - \frac{\sigma^2}{x_0^2\frac{e^{\lambda T}-1}{2\lambda T}+\sigma^2},
    \end{equation}
    which implies 
    $|\theta_\emph{\text{strong}}^*(\Delta t) - \lambda | \overset{\text{a.s.}}{\to} \infty$.
    Furthermore, if the event $E_\text{po}$ that there exists $h >0$ such that $\theta_\emph{\text{strong}}^*(h) > \lambda$ has nonzero probability, 
    we have 
    \begin{align*}
        \text{Pr}(\exists \Delta t^* >0 \ \text{such that} \ \theta_\emph{\text{strong}}^*(\Delta t^*) = \lambda \ | E_\text{po}) = 1.
    \end{align*}
\end{theorem}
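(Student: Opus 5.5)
The loss is a quadratic in $\theta$, so the plan is to write its minimizer in closed form and then pass to the limit with a strong law of large numbers. Since $\widehat{x}_k(\theta)-y_k = y_{k-1}\,\theta\Delta t - (y_k-y_{k-1})$, minimizing \eqref{def:1D-strong-form-loss} gives
\begin{equation*}
\Delta t\,\theta^*_{\text{strong}}(\Delta t) = \frac{\sum_{k=1}^K y_{k-1}(y_k-y_{k-1})}{\sum_{k=1}^K y_{k-1}^2},
\end{equation*}
which is well defined almost surely. We subtract $\lambda\Delta t$ and divide numerator and denominator by $K$. Substituting $y_k=x_k+\epsilon_k$, we expand both sums.

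\textbf{Denominator.} We write
\[
\frac1K\sum_{k=1}^K y_{k-1}^2=\frac1K\sum x_{k-1}^2+\frac2K\sum x_{k-1}\epsilon_{k-1}+\frac1K\sum\epsilon_{k-1}^2 .
\]
\begin{itemize}
\item The first term is a Riemann sum, so it converges to $\frac1T\int_0^T x_0^2e^{2\lambda t}\,dt=x_0^2\frac{e^{2\lambda T}-1}{2\lambda T}$. This is presumably the intended constant; the statement's exponent should be read as $2\lambda T$, and the plan would note that.
\item The third term tends to $\sigma^2$ almost surely by the SLLN.
\item The cross term is a weighted sum of i.i.d.\ zero-mean variables with bounded deterministic weights. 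It tends to $0$ almost surely by a weighted SLLN, e.g.\ Kolmogorov's criterion $\sum_k x_{k-1}^2/k^2<\infty$ applied along the triangular array.
\end{itemize}

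\textbf{Numerator.} We write $y_k-y_{k-1}=(x_k-x_{k-1})+\epsilon_k-\epsilon_{k-1}$, with $x_k-x_{k-1}=x_{k-1}(e^{\lambda\Delta t}-1)=O(\Delta t)$.
\begin{itemize}
\item The deterministic part $\frac1K\sum x_{k-1}(x_k-x_{k-1})$ and the terms $\frac1K\sum \epsilon_{k-1}(x_k-x_{k-1})$ are $O(\Delta t)\to0$.
\item The term $\frac1K\sum x_{k-1}(\epsilon_k-\epsilon_{k-1})$ tends to $0$ by the same weighted SLLN.
\item $\frac1K\sum\epsilon_{k-1}\epsilon_k\to0$ almost surely. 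Split it into even and odd $k$ so each piece is a sum of independent zero-mean terms, or use a martingale SLLN.
\item $-\frac1K\sum\epsilon_{k-1}^2\to-\sigma^2$.
\end{itemize}
Subtracting $\lambda\Delta t\to0$ gives the limit $-\sigma^2/(x_0^2 c_T+\sigma^2)$, with $c_T$ the Riemann-sum constant above. This limit is nonzero, so $|\theta^*_{\text{strong}}-\lambda|\sim C/\Delta t\to\infty$ almost surely.

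The main technical obstacle is that each $\Delta t$ comes with a different sample size. To keep a single probability space, I would interpret the triangular array as one noise sequence $\{\epsilon_k\}$ reused for every $K$, with the grid being the index. This is the natural reading of the statement. The SLLN then applies directly to the averages of $\epsilon_k^2$ and $\epsilon_k\epsilon_{k+1}$, and only the weighted averages with the slowly varying weights $x_{k-1}=x_0e^{\lambda T(k-1)/K}$ need care. For those I would use summation by parts: writing $S_n=\sum_{k\le n}\epsilon_k$ with $S_n/n\to0$ almost surely, and using that the weights have bounded total variation uniformly in $K$, the weighted average is bounded by $\max_{n\le K}|S_n|/K$ times a constant, which tends to $0$.

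\textbf{Second claim.} On $E_{\text{po}}$ there is an $h$ with $\theta^*_{\text{strong}}(h)-\lambda>0$, while the limit shows $\theta^*_{\text{strong}}(\Delta t)-\lambda\to-\infty$ almost surely as $\Delta t\to0$. If $\Delta t\mapsto\theta^*_{\text{strong}}(\Delta t)$ were continuous, the intermediate value theorem would give a $\Delta t^*$ with $\theta^*_{\text{strong}}(\Delta t^*)=\lambda$. It is not continuous as it stands, because $\Delta t$ ranges over $T/K$. I would therefore regard the estimator as a function of a continuous step $\Delta t$, sampling $x$ and the noise at $k\Delta t$, or extend it by interpolation. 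In that setting the closed-form ratio is continuous wherever its denominator is positive, which holds almost surely, and the intermediate value theorem yields $\Delta t^*$ almost surely on $E_{\text{po}}$. This gives conditional probability $1$. Making this continuity precise is the delicate point, and I would state the continuous-step interpretation explicitly.
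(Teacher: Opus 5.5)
Your proposal is correct and has the same skeleton as the paper's proof. Both use the closed-form least-squares minimizer, law-of-large-numbers limits of the normalized numerator and denominator, and the intermediate value theorem on a continuous extension of the error in $\Delta t$. You are also right that the constant must be $\frac{e^{2\lambda T}-1}{2\lambda T}$; the paper's own proof with $T=1$ obtains $\frac{e^{2\lambda}-1}{2\lambda}$.

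The algebra is organized differently. The paper uses the exact recursion $x_k=e^{\lambda h}x_{k-1}$ to write $\theta^*-\lambda=\mathcal{E}_{\lambda,h}+\frac{\sigma}{h}G_K/\|\bm{Y}_K\|^2$ with $G_K=\langle \bm{Y}_K,\bm{E}_{1,K}-e^{\lambda h}\bm{E}_{0,K}\rangle$. This separates the forward-Euler consistency error from a single noise inner product, and the paper's later discussion of the two error sources rests on it. Your term-by-term expansion reaches the same limits, just less transparently.

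The more substantive difference is the probability model. The paper draws a separate noise vector $(\varepsilon^{(K)}_j)_j$ for each $K$ and simply invokes the SLLN. If these rows are independent, almost-sure convergence of $\frac1K\sum_j(\varepsilon^{(K)}_j)^2$ does not follow from a finite variance alone; by the Hsu--Robbins--Erd\H{o}s theorem it requires $\mathbb{E}\,\varepsilon^4<\infty$. If the joint law across $K$ is left unspecified, the a.s.\ claim is not even well defined. Your single-sequence coupling, combined with the Abel-summation bound $\max_{n\le K}|S_n|/K\to 0$, makes the first claim rigorous under exactly the stated second-moment hypothesis. The price is a less physical model, in which the same noise values are reassigned to new times whenever $K$ changes.

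For the second claim, drop the continuous-step sampling option and commit to interpolation. With $K=\lfloor T/\Delta t\rfloor$, the sums gain a new term each time $T/\Delta t$ crosses an integer, so the estimator jumps there and the continuity you assert fails. Attaching the noise to the times $k\Delta t$ also conflicts with the index-attached noise you used for the limit. The paper linearly interpolates $\|\bm{Y}_N\|^2$ and $G_N$ between $\Delta t=1/(N+1)$ and $1/N$, but any continuous interpolation through the grid values works. Some such reinterpretation is unavoidable: on the countable grid $\{T/K\}$, the event $\theta^*_{\text{strong}}=\lambda$ has probability zero for noise with a density.
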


\begin{proof}
    See Appendix~\ref{app:proof_thm41}
\end{proof}

In the proof of Theorem~\ref{thm:strong-form}, particularly \eqref{app:proof_thm41}, it is shown that the error can be decomposed into two parts. One part stems from the numerical integration error of Euler's method, which vanishes as $\Delta t \to 0$.
The other term is caused by the noise and depends sensitively on $\Delta t$, which diverges as $\Delta t \to 0$, making the error of the estimator significant. 
This reveals a fundamental limitation of using the strong form loss \eqref{def:1D-strong-form-loss} in handling noisy data.
The left of Figure~\ref{fig:strong-form-illustration} illustrates Theorem~\ref{thm:strong-form}, which presents the scattered errors of the strong form solution at varying $\Delta t$ and $\sigma$.
It is clearly seen that the errors become significantly large as $\Delta t$ gets smaller.
At the same time, it is observed that despite of the divergent tendency of the error, there seems to exist, at each noise level, a peculiar $\Delta t$ on which the error is (almost) zero.
As a matter of fact, this is expected from Theorem~\ref{thm:strong-form}.
On the right, the stepsize multiplied absolute errors are shown with respect to the stepsize at the noise level $10^{-2}$.
The red and blue marks represent the cases where $\theta^* > \lambda$ and $\theta^* < \lambda$, respectively. 
It is observed that there exists a stepszie that yields $\theta^*=\lambda$ and such stepsize exists when the sign changes, which numerically verifies Theorem~\ref{thm:strong-form}.
\begin{figure}[h!]
    \centering
    {\includegraphics[width=0.49\textwidth]{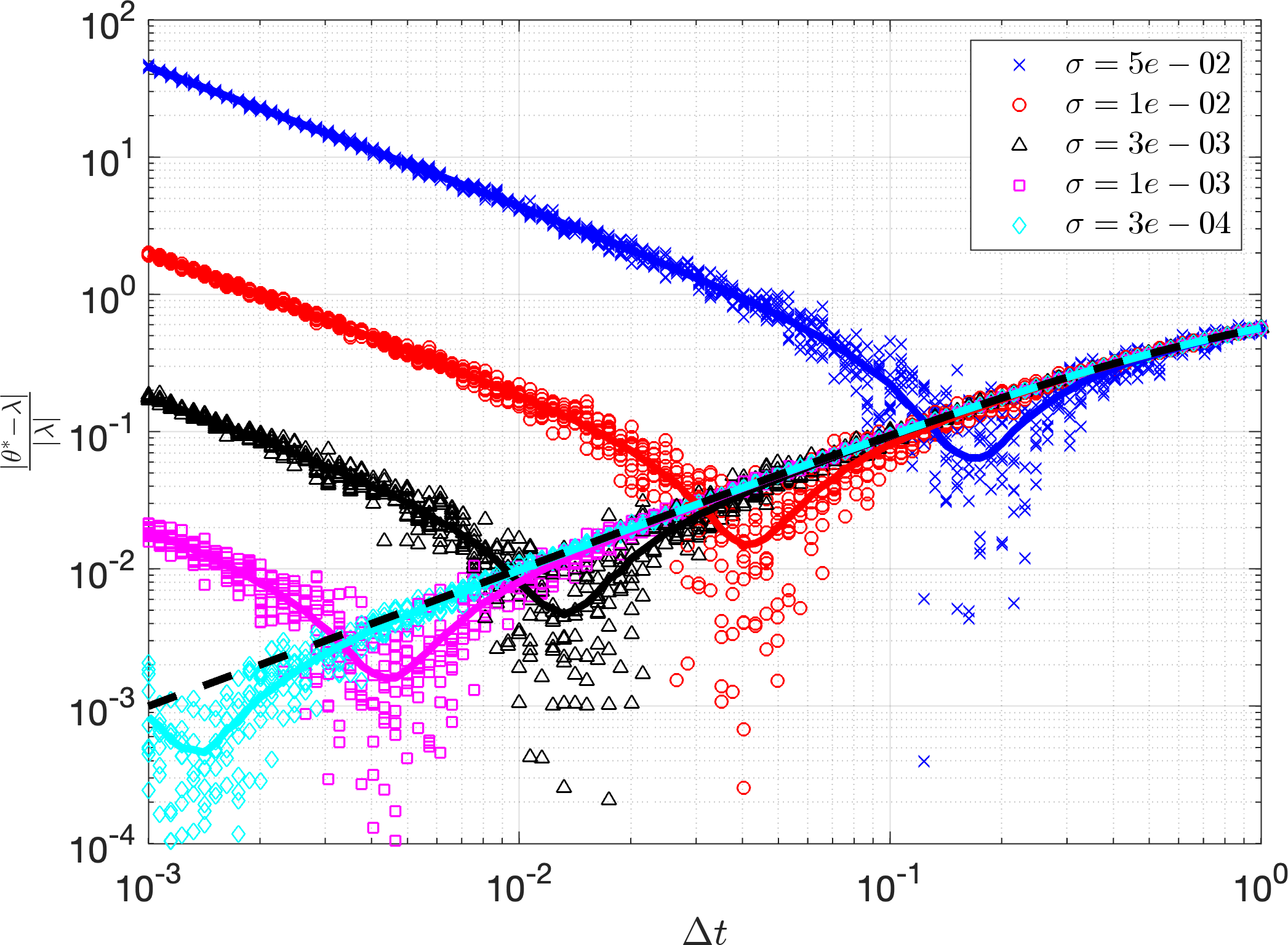}
    \includegraphics[width=0.49\textwidth]{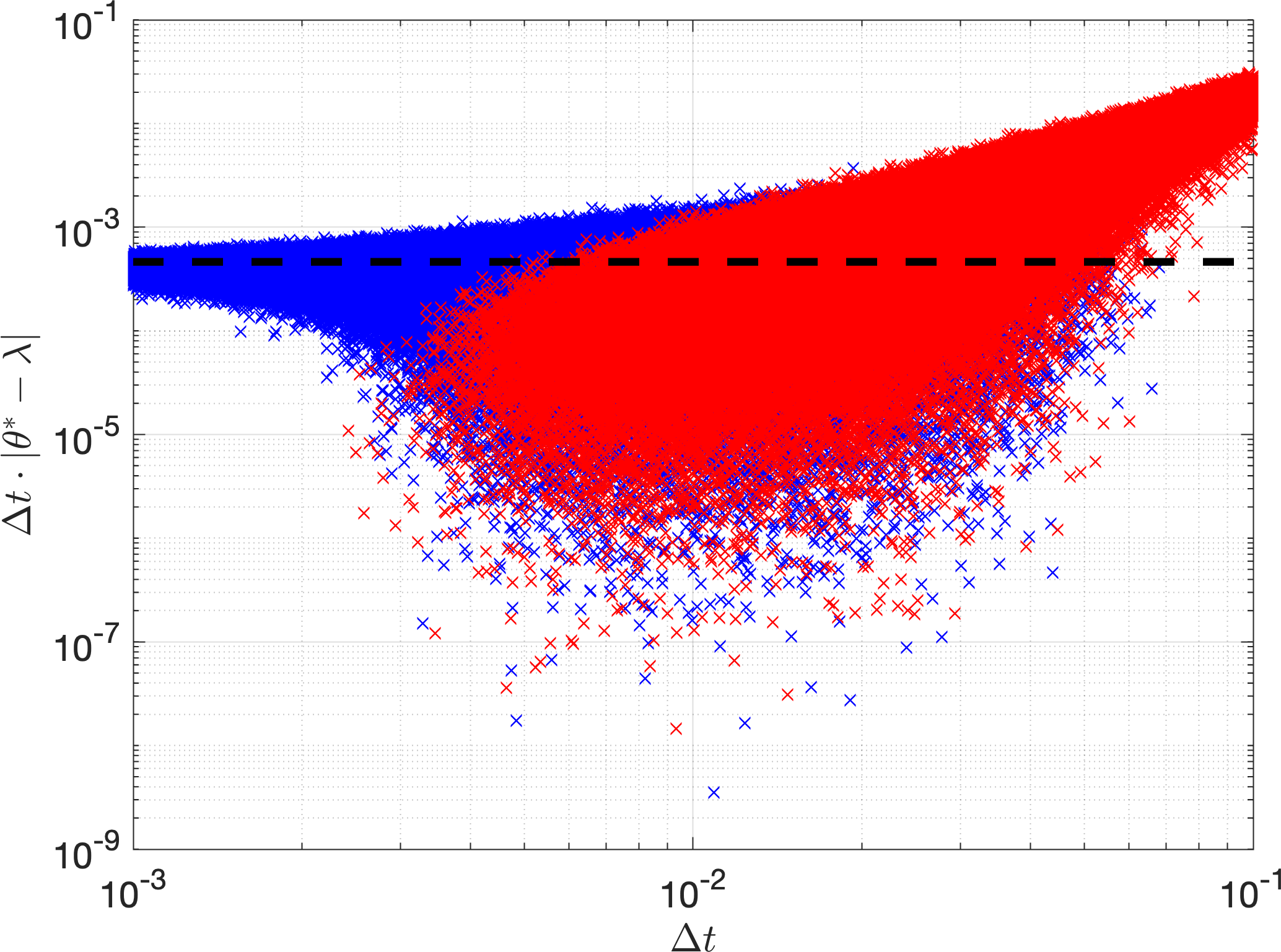}}
    \caption{(Left) The relative errors of the strong form solution with respect to the step size $\Delta t$ at varying noise levels $\sigma$. 
    The target parameter is $\lambda = -2$ and the noises are drawn from the Gaussian distributions.
    At each $\Delta t$ and $\sigma$, the scattered marks correspond to 10 independent numerical simulations.
    The solid lines represent the mean errors over 1,000 independent simulations.
    The dashed-line is the numerical integration error.
    (Right) The stepsize multiplied absolute errors with respect to the stepsize.
    The dashed-line is the limit value from \eqref{eqn:thm-strong-limit} at $\sigma=10^{-2}$.
    There exists a stepsize that yields the exact estimation of the strong form solution.
    }
    \label{fig:strong-form-illustration}
\end{figure}

\subsection{Weak form loss}
Let $\phi:[0,T]\to \mathbb{R}$ be a smooth test function supported on $[t_0, t_f] \subset [0,T]$. Let $S = t_f-t_0$ be the length of the support and consider a symmetric quadrature rule centered around $t^* = \frac{t_0+t_f}{2}$.
That is, $\{(t^*_i, w_i)\}_{i=-m}^{m}$ is a quadrature rule where $t^*_i = t^* + i h$ and $w_i = w_{-i}$ where $h = \frac{S}{2m}$, satisfying $\sum_{i=-m}^m w_i = S$.
Note that $h$ is not necessarily the same as $\Delta t$ used in the strong formulation.
For the sake of discussion, we assume 
$t^* = n_0\Delta t$ and 
$h = \bar{n}\Delta t$ for some positive integers $n_0, \bar{n}$, and let $\tilde{y}_i := y_{n_0+\bar{n}i}$ be the noisy measurement at time $t_i^*$.
Note that $y_k:= x(k\Delta t) + \varepsilon_k$ is the noisy measurement at time $k\Delta t$.

It then can be checked that the weak form loss \eqref{def:weak-form-loss} is given by
\begin{equation} \label{def:1D-weak-form-loss}
    \mathcal{L}^\text{weak}(\theta) = \left|\theta \sum_{i=-m}^m \tilde{y}_i\psi_i + \sum_{i=-m}^m \tilde{y}_i \psi'_i\right|^2,
\end{equation}
where $\phi_i = \phi(t_i^*)$, $\phi'_i = \phi'(t_i^*)$,
$\psi_{i} = w_i\phi_i$ and $\psi_i' = w_i\phi'_i$.

\begin{theorem} \label{thm:weak-form}
    Suppose that the test function satisfies 
    \begin{enumerate}
        \item Symmetric centered at $t^*$; $\phi_i = \phi_{-i}$.
        \item Asymmetric derivative; $-\phi'_i = \phi'_{-i}$.
        \item $\phi'_0 = 0$, and $\sum_{i=-m}^m \phi_iw_i = 1$.
        \item $\sum_{i=1}^m 2i\Delta t\psi_i' = -1$,
        \item $\sum_{i=1}^m \left(\psi_i'e_\lambda(ih)  + \psi_i e'_\lambda(ih)\right) = 0$,
    \end{enumerate}
    where $e_\lambda(t) = \sinh(\lambda t) - \lambda t$
    and $e'_\lambda(t) = \frac{d}{dt}e_\lambda(t) = \lambda \cosh(\lambda t)-\lambda$.
    Let $\theta^*_\emph{\text{weak}}$ be the minimizer of \eqref{def:1D-weak-form-loss}. 
    Then, 
    \begin{equation} \label{eqn:weak-error}
        \frac{\theta^*_\emph{\text{weak}} - \lambda}{\lambda} = \sigma \cdot \frac{-\langle \bm{E}, \Psi_{1,\lambda^{-1}}\rangle
        }{x_0(1 + 2\lambda^{-1}\sum_{k=1}^{m} \psi_k e'_\lambda(k\Delta t)) + \langle \bm{E}, \Psi_{1, 0}\rangle},
    \end{equation}
    where $\bm{E}= (\sigma^{-1}\varepsilon_{n_0+\bar{n}i})_{i=-m}^m$
    and $\Psi_{a,b} = (a\psi_i + b\psi_i')_{i=-m}^m$.
\end{theorem}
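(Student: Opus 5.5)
Since \eqref{def:1D-weak-form-loss} is the square of an affine function of $\theta$, its minimizer solves
\[
\theta^*_{\text{weak}} = -\frac{\sum_i \tilde y_i\psi_i'}{\sum_i \tilde y_i\psi_i},
\qquad\text{so}\qquad
\theta^*_{\text{weak}}-\lambda = -\frac{\sum_i \tilde y_i(\psi_i'+\lambda\psi_i)}{\sum_i \tilde y_i\psi_i}.
\]
This assumes the denominator is nonzero, which is implicit in the statement. Dividing numerator and denominator by $\lambda$, the numerator becomes $\sum_i \tilde y_i\,(\psi_i+\lambda^{-1}\psi_i')$. I then split $\tilde y_i = x(t_i^*)+\varepsilon_{n_0+\bar n i}$. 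The noise parts are exactly $\sigma\langle \bm E,\Psi_{1,\lambda^{-1}}\rangle$ in the numerator and $\sigma\langle \bm E,\Psi_{1,0}\rangle$ in the denominator. The remaining work is to evaluate the deterministic sums $\sum_i x(t_i^*)\psi_i$ and $\sum_i x(t_i^*)(\psi_i+\lambda^{-1}\psi_i')$ using conditions 1--5.

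Write $x(t_i^*) = x(t^*)e^{\lambda i h}$. I read ``$x_0$'' in \eqref{eqn:weak-error} as the state at the center $t^*$; otherwise a factor $e^{\lambda t^*}$ is simply absorbed. Expand $e^{\lambda ih} = \cosh(\lambda ih)+\sinh(\lambda ih)$ and pair the indices $\pm i$.

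By conditions 1 and 2, $\psi_i$ is even and $\psi_i'$ is odd, so only the even part of $e^{\lambda ih}$ survives against $\psi_i$, and only the odd part survives against $\psi_i'$. Writing $\cosh(\lambda ih) = 1+\lambda^{-1}e'_\lambda(ih)$ and using $\sum_i\psi_i = 1$ from condition 3, the denominator's signal part is
\[
x_0\Bigl(1+2\lambda^{-1}\sum_{k\ge1}\psi_k e'_\lambda(kh)\Bigr),
\]
which matches the stated denominator, with $h$ and $\Delta t$ identified as in the statement. For the $\psi'$ sum, write $\sinh(\lambda ih) = \lambda ih + e_\lambda(ih)$. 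Then
\[
\sum_i \psi_i'\sinh(\lambda ih) = 2\lambda\sum_{k\ge1} kh\,\psi_k' + 2\sum_{k\ge1}\psi_k' e_\lambda(kh).
\]
Condition 4 turns the first term into $-\lambda$, so $\lambda^{-1}$ times the $\psi'$ sum equals $-1+2\lambda^{-1}\sum_k\psi_k'e_\lambda(kh)$.

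Adding this to the $\psi$ sum, the numerator's signal part becomes
\[
x_0\Bigl[\,1+2\lambda^{-1}\sum_k\psi_ke'_\lambda(kh) - 1 + 2\lambda^{-1}\sum_k\psi_k'e_\lambda(kh)\Bigr]
= 2\lambda^{-1}x_0\sum_k\bigl(\psi_k e'_\lambda(kh)+\psi_k'e_\lambda(kh)\bigr),
\]
and condition 5 makes this zero. Hence the numerator reduces to pure noise, $\sigma\langle\bm E,\Psi_{1,\lambda^{-1}}\rangle$. With the leading minus sign this gives exactly \eqref{eqn:weak-error}.

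I expect the main obstacle to be bookkeeping rather than any single hard step. Conditions 3 and 4 are written with $\Delta t$ in some places and $h$ in others, and I would carry out the computation in the variable $ih$ and reconcile with the statement only at the end. The anchoring of $x_0$ at $t^*$ needs the same care. Condition 2 forces $\phi_0' = 0$, so the $i=0$ term contributes only through $\psi_0$, via $\cosh(0)=1$; I would check that this is consistent with the normalization $\sum_i\psi_i = 1$ in condition 3.
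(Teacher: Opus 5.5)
Your argument is correct and is essentially the paper's proof. The paper also solves for $\theta^*_{\text{weak}}$ explicitly and pairs the nodes $\pm k$ around the center, with $x_0$ the state at $t^*$ and $h=\Delta t$, exactly as you assume. It then uses a Taylor lemma that amounts to your identities $\cosh(\lambda kh)=1+\lambda^{-1}e'_\lambda(kh)$ and $\sinh(\lambda kh)=\lambda kh+e_\lambda(kh)$, after which conditions 3--5 cancel the deterministic part of the numerator.

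One small correction to your last step. What you actually obtain has $\sigma\langle \bm{E},\Psi_{1,0}\rangle$ in the denominator, and so does the paper, whose final denominator is $\frac{x_0}{\sigma}(\cdots)+\langle \bm{E},\Psi_{1,0}\rangle$. The displayed error formula is therefore missing that factor of $\sigma$, so your result does not reproduce it exactly.
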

\begin{proof}
    See Appendix \ref{app:proof_thm43}.
\end{proof}

Theorem~\ref{thm:weak-form} presents two conditions on which \eqref{eqn:weak-error} holds.
The expression \eqref{eqn:weak-error} indicates that the relative error of the weak form solution and the noise level $\sigma$ are linearly related. In particular, the weak form solution becomes exact in the noiseless case, i.e., $\sigma=0$. See also Figure~\ref{fig:weak-form-illustration}.
The remaining question is whether such a test function that satisfies all the conditions in Theorem~\ref{thm:weak-form} exists and when the error can be small.
Theorem~\ref{prop:weak-form-L1} provides an affirmative answer for a special case of $m=1$.

\begin{theorem} \label{prop:weak-form-L1}
    Let $m=1$ and consider any test function satisfying 
    \begin{align*}
        \phi_0 &=\frac{1}{w_0}\left[1 - 2\frac{e_\lambda(h)}{Se'_\lambda(h)}\right], \quad 
        \phi_1 = \frac{1}{w_1}\frac{e_\lambda(h)}{S e'_\lambda(h)},
        \quad 
        \phi_0' = 0, \quad 
        \phi_1' = -\frac{1}{w_1S}.
    \end{align*}
    Then, the weak form solution satisfies 
    $\lim_{S \to \infty} \theta^*_\emph{\text{weak}}(S) \overset{\text{a.s.}}{=} \lambda.$
    Furthermore, if $\varepsilon_i$'s are Gaussian, 
    $\lim_{S \to 0} |\theta^*_\emph{\text{weak}}(S) - \lambda| \overset{\text{p}}{=} \infty$.
\end{theorem}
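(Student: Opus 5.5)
The plan is to reduce to Theorem~\ref{thm:weak-form} with $m=1$: first check that the prescribed test-function values satisfy its five hypotheses, then read off both limits from the closed-form error \eqref{eqn:weak-error}. With $m=1$ we have $h=S/2$.

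\textbf{Step 1: the hypotheses of Theorem~\ref{thm:weak-form}.} Set $\phi_{-1}=\phi_1$ and $\phi'_{-1}=-\phi'_1$, so that conditions 1 and 2 hold by construction. By the symmetry of the rule, $w_{-1}=w_1$. The weighted values are then
\begin{align*}
\psi_0 = 1-\frac{2e_\lambda(h)}{Se'_\lambda(h)}, \quad \psi_{\pm1} = \frac{e_\lambda(h)}{Se'_\lambda(h)}, \quad \psi'_0=0, \quad \psi'_{\pm1}=\mp\frac{1}{S}.
\end{align*}
We then verify the remaining conditions.
\begin{itemize}
\item[(3)] $\sum_i\psi_i=1$ holds immediately.
\item[(4)] The left-hand side is $2h\psi'_1=-2h/S=-1$. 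I read the $\Delta t$ in condition 4 as the quadrature spacing $h$, which is what the integration-by-parts identity actually requires.
\item[(5)] $\psi'_1e_\lambda(h)+\psi_1e'_\lambda(h)=-e_\lambda(h)/S+e_\lambda(h)/S=0$.
\end{itemize}
Hence \eqref{eqn:weak-error} applies.

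\textbf{Step 2: specialising the error formula.} The numerator of \eqref{eqn:weak-error} is
\begin{align*}
N(S)=-\Big[E_0\psi_0+(E_1+E_{-1})\psi_1+\lambda^{-1}\tfrac{E_{-1}-E_1}{S}\Big],
\end{align*}
and the denominator is
\begin{align*}
D(S)=x_0\Big(1+\tfrac{2e_\lambda(h)}{\lambda S}\Big)+E_0\psi_0+(E_1+E_{-1})\psi_1 .
\end{align*}
Here $E_{-1},E_0,E_1$ are the three normalized noise variables at the nodes $t^*-h$, $t^*$, $t^*+h$. They are a.s. finite and are treated as fixed random variables as $S$ varies. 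We assume $x_0\neq0$.

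\textbf{Step 3: the limit $S\to\infty$.} The ratio $e_\lambda(h)/(Se'_\lambda(h))\sim \tanh(\lambda h)/(\lambda S)$ tends to $0$. Consequently $\psi_1\to0$, $\psi_0\to1$ and $\psi'_1\to0$, so $N(S)\to -E_0$, which is a.s. finite. On the other hand, $e_\lambda(S/2)/S$ grows like $\sinh(\lambda S/2)/S$, which is exponentially large in absolute value. Therefore $|D(S)|\to\infty$ a.s. Since the relative error equals $\sigma N/D$, it tends to $0$ a.s., giving $\theta^*_{\text{weak}}\to\lambda$.

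\textbf{Step 4: the limit $S\to0$.} Taylor expansion gives $e_\lambda(h)=\lambda^3h^3/6+O(h^5)$ and $e'_\lambda(h)=\lambda^3h^2/2+O(h^4)$. Hence $\psi_1\to1/6$ and $\psi_0\to2/3$, which are the Simpson weights, and $2e_\lambda(h)/(\lambda S)\to0$. It follows that
\begin{align*}
D(S)\to D_0:=x_0+\tfrac23E_0+\tfrac16(E_1+E_{-1}),
\end{align*}
which is a.s. finite. By contrast, $N(S)$ contains the term $-\lambda^{-1}(E_{-1}-E_1)/S$, which blows up as $S\to0$.

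To conclude, fix $M>0$. We need
\begin{align*}
\Pr\big(|\sigma N(S)/D(S)|>M\big)\to1 .
\end{align*}
This follows from two facts. First, $|D(S)|$ is tight uniformly for small $S$. Second, $E_{-1}-E_1$ is Gaussian and nondegenerate, so $\Pr(|E_{-1}-E_1|<\delta)\to0$ as $\delta\to0$. The Gaussian assumption is used exactly here, to rule out an atom of $E_{-1}-E_1$ at $0$. The same estimate also shows that the vanishing of $D$ near $0$ only helps the divergence.

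\textbf{Main obstacle.} I expect the delicate point to be making the probabilistic limits rigorous while the sampling nodes, and hence the noise variables, change with $S$. I would handle this by working along an arbitrary sequence $S_n$ and bounding the noise uniformly in probability, rather than pointwise.
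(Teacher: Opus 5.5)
Your proposal is correct and shares the paper's skeleton. First you check the hypotheses of Theorem~\ref{thm:weak-form}. The paper instead solves conditions 3--5 for $\phi_1',\phi_1,\phi_0$, but it likewise identifies $\Delta t$ with $h$ and takes the symmetric extension for granted. Then $S\to\infty$ follows from the blow-up of $e_\lambda(h)/(\lambda h)$ in the denominator, exactly as you do.

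The real difference is the endgame for $S\to0$. The paper sets $X=\langle\bm{E},\Psi_{1,\lambda^{-1}}\rangle$ and $Y=\langle\bm{E},\Psi_{1,0}\rangle$, and writes $X=\beta Y+U$ with $U$ Gaussian and independent of $Y$. It conditions on $Y$ to get $\Pr(|W|\le M\mid Y)\le 2M|f+Y|/\sqrt{2\pi\,\mathrm{Var}(U)}$ from the Gaussian density, and concludes by dominated convergence. Since $\mathrm{Cov}(X,Y)=\mathrm{Var}(Y)$, in fact $\beta=1$; the paper's remark that $\beta^2\to\infty$ is a harmless slip. So $U=\lambda^{-1}(E_{-1}-E_1)/S$ is precisely the term you isolate, and both proofs rest on the same divergent piece.

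Your union bound uses tightness of $D$ and of the bounded part $Y$ of the numerator, plus $\Pr(|E_{-1}-E_1|\le cS)\to0$. It needs no conditioning or independence, only that the noise law is atomless, so it proves the second claim beyond the Gaussian case. The paper's conditioning, in exchange, yields an explicit bound $\Pr(|W|\le M)=O(MS)$ at no extra cost.

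Two minor points.
\begin{itemize}
\item The displayed formula \eqref{eqn:weak-error} that you specialise has lost a factor $\sigma$ in front of $\langle\bm{E},\Psi_{1,0}\rangle$. The appendix derivation has $\tfrac{x_0}{\sigma}(\cdots)+\langle\bm{E},\Psi_{1,0}\rangle$. This affects neither limit.
\item On your ``main obstacle'': the in-probability claim depends only on the law at each fixed $S$, so moving nodes are already harmless there. A uniform-in-probability bound would not, however, rescue the almost-sure claim as $S\to\infty$. For that you need the fixed-noise reading, which the paper uses tacitly, or a Borel--Cantelli estimate against the exponentially growing denominator.
\end{itemize}
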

\begin{proof}
    See Appendix \ref{app:proof_prop45}.
\end{proof}


Figure \ref{fig:weak-form-illustration} illustrates Theorem~\ref{thm:weak-form}, which shows the relative errors of the weak-form solution with respect to the length of the support $S$ (left) of the test function and the noise level $\sigma$ (right).
On the left, it is clearly observed both the divergent and the convergent behaviors of the error with respect to varying $S$. 
It is also seen that the noise level is only linearly related to the error as expected from \eqref{eqn:weak-error}.

On the right, we compare the strong and weak form solutions on the same data with $\Delta t = 0.01$ with respect to the noise levels. It is observed that the relative error of the strong form converges to the numerical error (forward Euler) as the noise level gets smaller, while the one of the weak form converges to zero. 
Also, it can be seen that the weak form solution yields a higher variance than the strong form. This stems from the fact that the weak form uses $2m+1$ data samples while the strong form uses all data. 
Overall, in this particular example, the weak form tends to yield more accurate estimation while its performance depends on the length of the support.
\begin{figure}[h!]
    \centering
    {\includegraphics[width=0.48\textwidth]{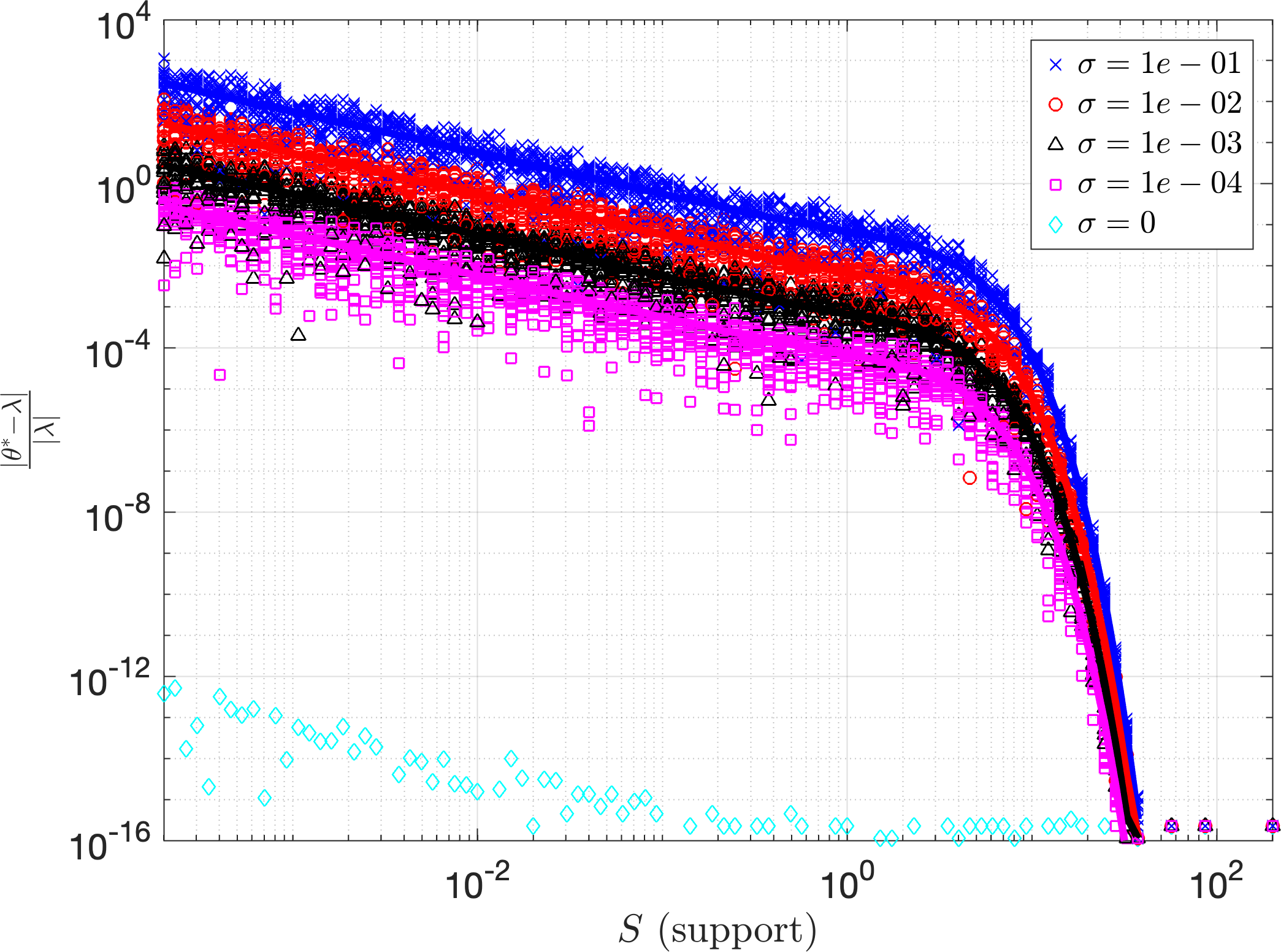}
    \includegraphics[width=0.48\textwidth]{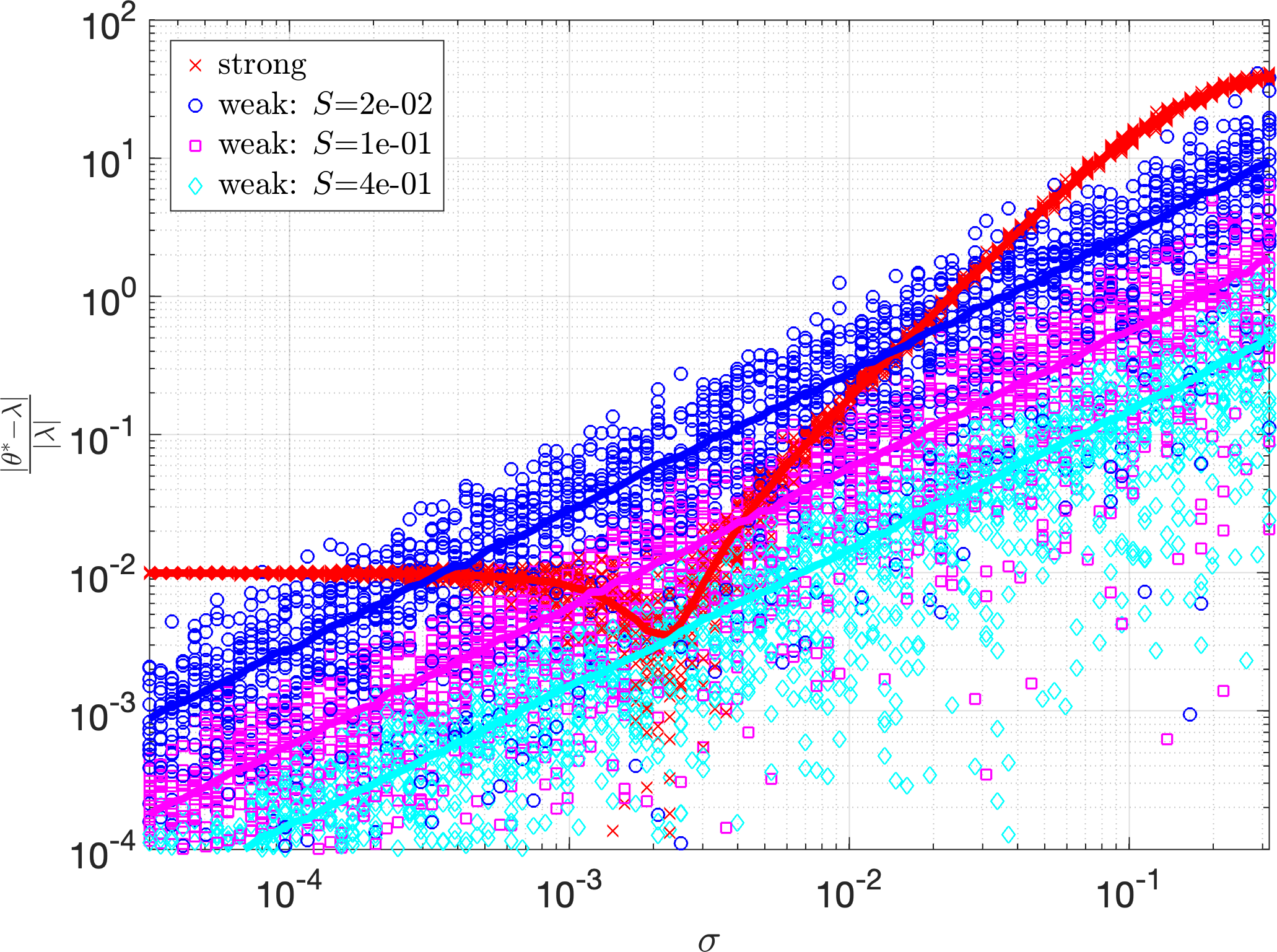}
    }
    \caption{Left: The relative errors of the weak form solutions with respect to the length of the support $S$ using the test function from Theorem~\ref{prop:weak-form-L1}, at varying different noise levels $\sigma$.
    Right: The relative errors of both the strong and weak form solutions with respect to the noise levels.
    In both figures, the scattered marks correspond to 10 independent numerical simulations at each fixed setup.
    The solid lines are the mean errors over 1,000 independent simulations.}
    \label{fig:weak-form-illustration}
\end{figure}

\section{Numerical examples}
\label{sec:results}

In this section, we present several numerical experiments to demonstrate the effectiveness of WGFINNs. 
WGFINNs and GFINNs are trained by minimizing the weighted weak-form loss function (\ref{def:weak-form-loss}) and the weighted strong-form loss (\ref{def:strong-form-loss}), respectively, with State-RBA described in Section \ref{sec:method}. To quantify the prediction accuracy, we use the following averaged relative $\ell_2$ error computed over test trajectories:

\begin{equation}
\label{eq:test_rmse}
e_{\bfa{x}}^{l_2}
= \frac{1}{d N_{\mathrm{test}}}
\sum_{i=1}^{N_{\mathrm{test}}}
\sum_{j=1}^{d}
\sqrt{
\sum_{k=0}^{K} \left(\left(\bfa{x}_{k}^{(i)}\right)_j- \left(\widetilde{\bfa{x}}_{k}^{(i)}\right)_j\right)^{2}
\Big/ 
\sum_{k=0}^{K} \left(\left(\bfa{x}_{k}^{(i)}\right)_j\right)^{2}
},
\end{equation}
where $\bfa{x}_k^{(i)} = \bfa{x}(t_k;\bfa{x}_0^{(i)})$ is the ground truth (GT) solution, $\widetilde{\bfa{x}}_k^{(i)}$ represents the corresponding prediction obtained by solving (\ref{eq:strong-NN}) with $\widehat{\bfa{x}}(0) = \bfa{x}_0^{(i)}$ by applying a numerical integrator (e.g. Runge-Kutta methods), and $N_{\text{test}}$ denotes the number of test trajectories. 

To evaluate robustness under noisy data, we corrupt training trajectories with state-dependent Gaussian noise. That is, the noisy training data are generated as
$$\bfa{y}_k^{(i)} = \bfa{x}_k^{(i)} + \bfa{\epsilon}_k^{(i)}, \quad \bfa{\epsilon}_k^{(i)} \sim \mathcal{N}(0, \text{diag}((\rho\sigma_1)^2, \ldots, (\rho\sigma_d)^2)),$$
with $\sigma_l$ defined in Section \ref{sec:sw_rba}, and $\rho \in [0,1]$ represents the noise level (e.g., $\rho = 0.05$ corresponds to $5\%$ noise).

All the implementations were done on a Livermore Computing Tuolumne system's CDNA 3 GPU (APU: AMD MI300A), located at the Lawrence Livermore National Laboratory. The system comprises a total of 4,608 GPUs, with four GPUs deployed per compute node. Each GPU provides 512 GiB of global memory.
The source codes are written in the open-source PyTorch \cite{paszke2019pytorch} and are published in GitHub\footnote{GitHub page: \url{https://github.com/pjss1223/WGFINNs}}.
Other implementation details can be found in Appendix~\ref{appendix:training_details}. 
In all the examples, WGFINNs and GFINNs share the same NN architectures and they are both trained with \texttt{AdamW} optimizer \cite{loshchilov2017decoupled}.

\subsection{Two gas containers exchanging heat and volume}
\label{sec:gas_containers}
We consider a system consisting of two ideal gas containers separated by a movable wall,
through which both heat and volume can be exchanged. 
The state of the system is characterized by four variables:
the position \(q\) and momentum \(p\) of the wall,
together with the entropies of the gases in the two containers,
denoted by \(S_1\) and \(S_2\). 
This system admits a natural formulation within the GENERIC framework and has been extensively studied in the literature~\cite{ottinger2005beyond, shang2020structure, zhang2022gfinns, park2024tlasdi}. We refer the reader to, e.g., \cite{zhang2022gfinns} for the governing equations and system parameter settings.

Figure~\ref{fig:GC_noise_err} presents the relative $\ell_2$ test errors (\ref{eq:test_rmse}) with respect to varying noise levels for both WGFINNs and GFINNs. The results of five independent simulations are shown.
It is observed that while the test errors for both methods tend to increase with the noise level, GFINNs exhibit significantly larger errors, especially at higher noise levels. This suggests that the strong form loss is more sensitive to noise in the training data. 
On the other hand, WGFINNs consistently achieve significantly lower test errors across all noise regimes and exhibit robustness to noise.

\begin{figure}[!t]
 \centering
\includegraphics[scale=0.34]{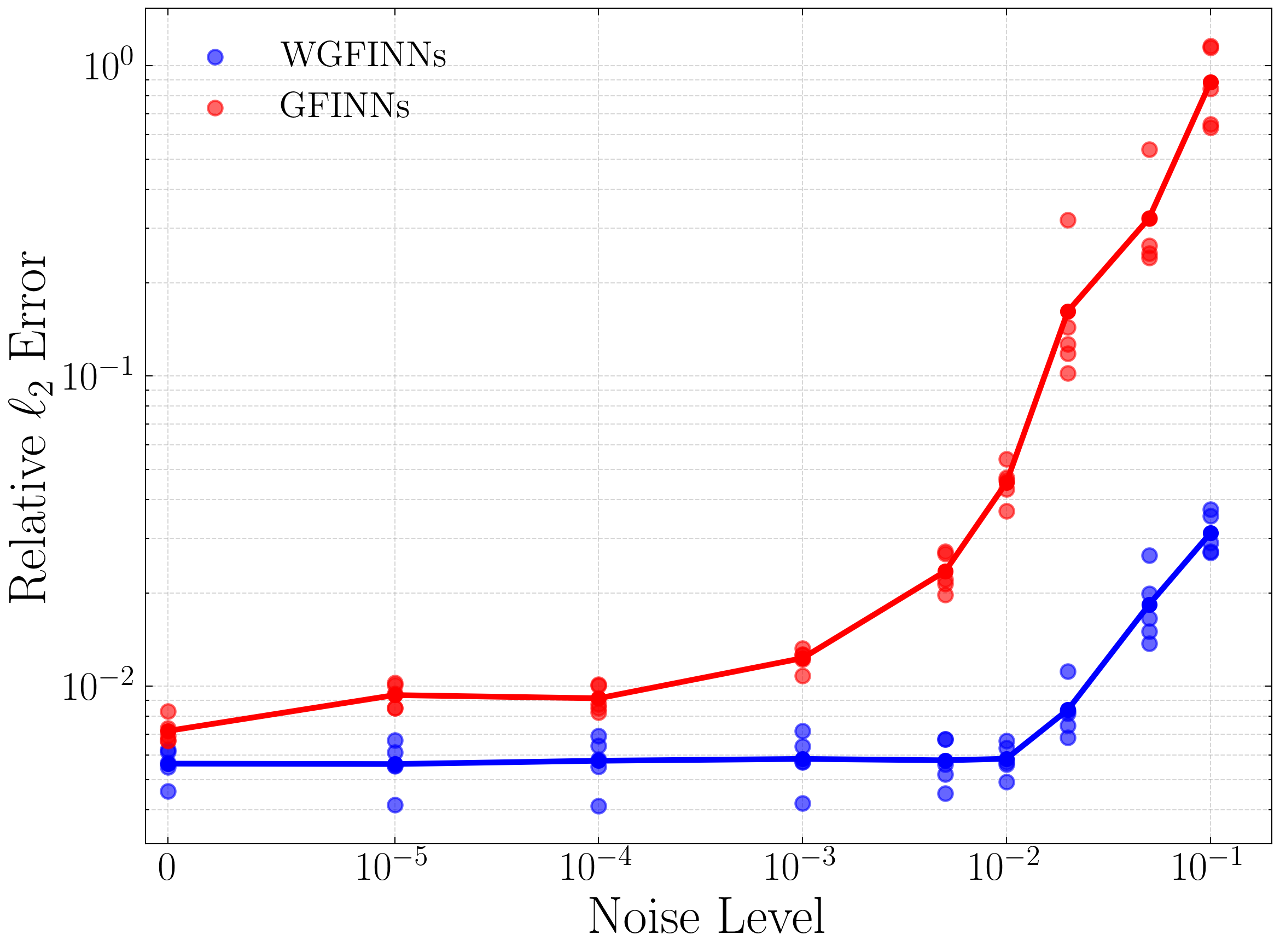}
\caption{Example \ref{sec:gas_containers}. The relative $\ell_2$ test errors (\ref{eq:test_rmse}) under varying noise levels. The solid line represents the mean across five independent simulations.}
\label{fig:GC_noise_err}
\end{figure}  



In Figure~\ref{figure:GC_WGFINNs_vs_GFINNs}, we plot trajectories of learned dynamics starting at a randomly chosen test initial state.
Each row corresponds to the results for the noise level of $0\%$ (top), $5\%$ noise (middle), and $10\%$ (bottom).
It is seen that GFINNs fail to predict meaningful trajectories under $10\%$ noise, whereas WGFINNs yield accurate predictions.

A unique advantage of using a physically structured NN model is the interpretability. In particular, the use of GFINNs, as NN models, provides the learned energy $E_{\text{NN}}$ and entropy $S_{\text{NN}}$ functions at the end of training, which can be utilized in post-analysis.
Figure~\ref{fig:GC_Entropy_Energy} shows the calibrated contours of $E_{\text{NN}}$ and $S_{\text{NN}}$ (top and middle rows) and the calibrated $S_{\text{NN}}$ evaluated along predicted trajectories across noise levels (bottom row), comparing 
them with GT. 
While how the calibration is done is detailed in Appendix \ref{sec:app_GC}, it basically applies an affine transformation to remove the indeterminacy of $E_{\text{NN}}$ and $S_{\text{NN}}$ based on GT.
In the $10\%$-noise case, WGFINNs are capable of maintaining energy and entropy contours well aligned 
with the reference, whereas GFINNs produce severely deviated energy and entropy. This illustrates WGFINNs' robust capability in discovering physical quantities from high-level, noisy observations.
Furthermore, the bottom row illustrates compliance with the second law of thermodynamics, which all the models follow well. Yet, the one by GFINNs differs significantly from GT, whereas WGFINNs produce accurate predictions.


   \begin{figure}[!ht]
	\centering
 {\includegraphics[width=\textwidth]{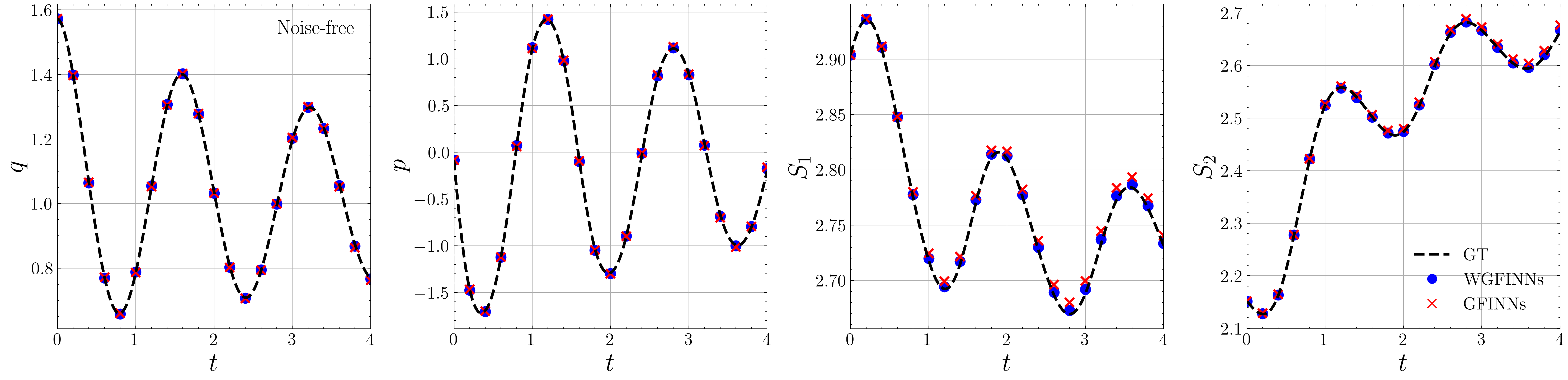}
 \includegraphics[width=\textwidth]{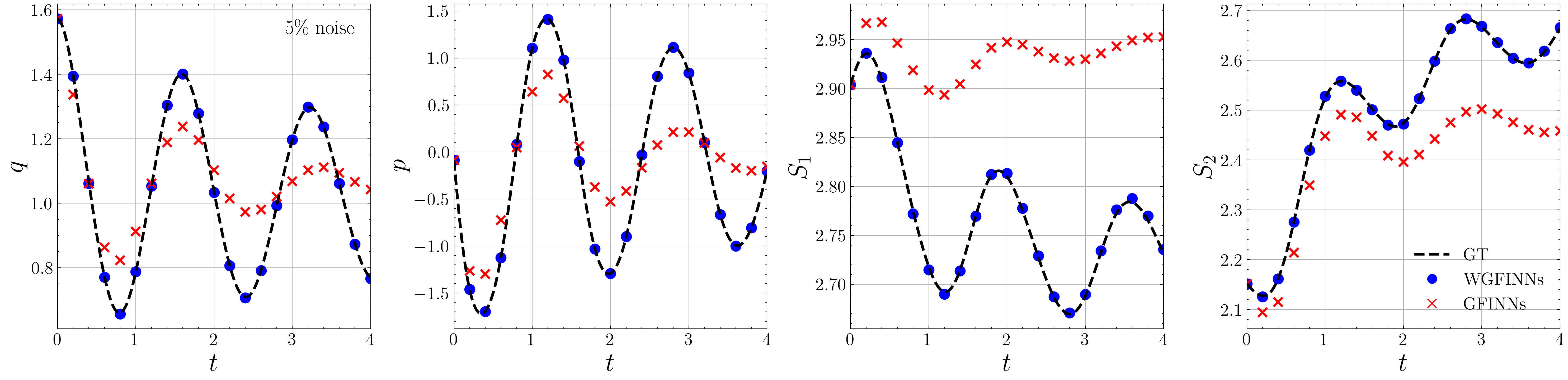}
 \includegraphics[width=\textwidth]{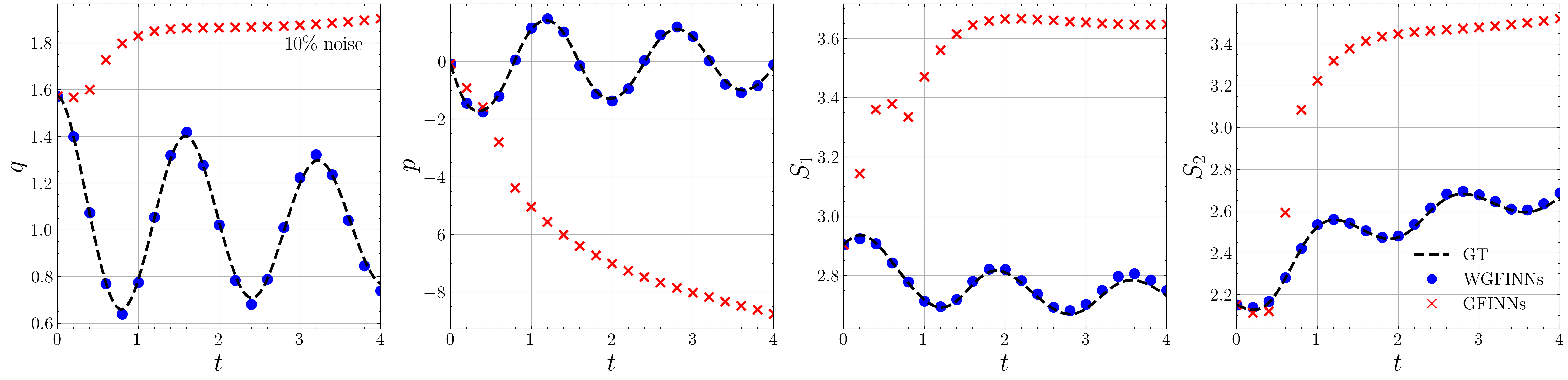}
 }
 \vspace{-2ex}
  \caption{Example \ref{sec:gas_containers}. Ground-truth (GT) trajectories and the corresponding predictions by WGFINNs and GFINNs.
From top to bottom: noise-free data, 5$\%$ noise, and 10$\%$ noise.
For each method, the results obtained from the model with the smallest training loss among five independent runs are shown.}
\label{figure:GC_WGFINNs_vs_GFINNs}
\end{figure}


\begin{figure}[ht]
 \centering
 {\includegraphics[width=\textwidth]{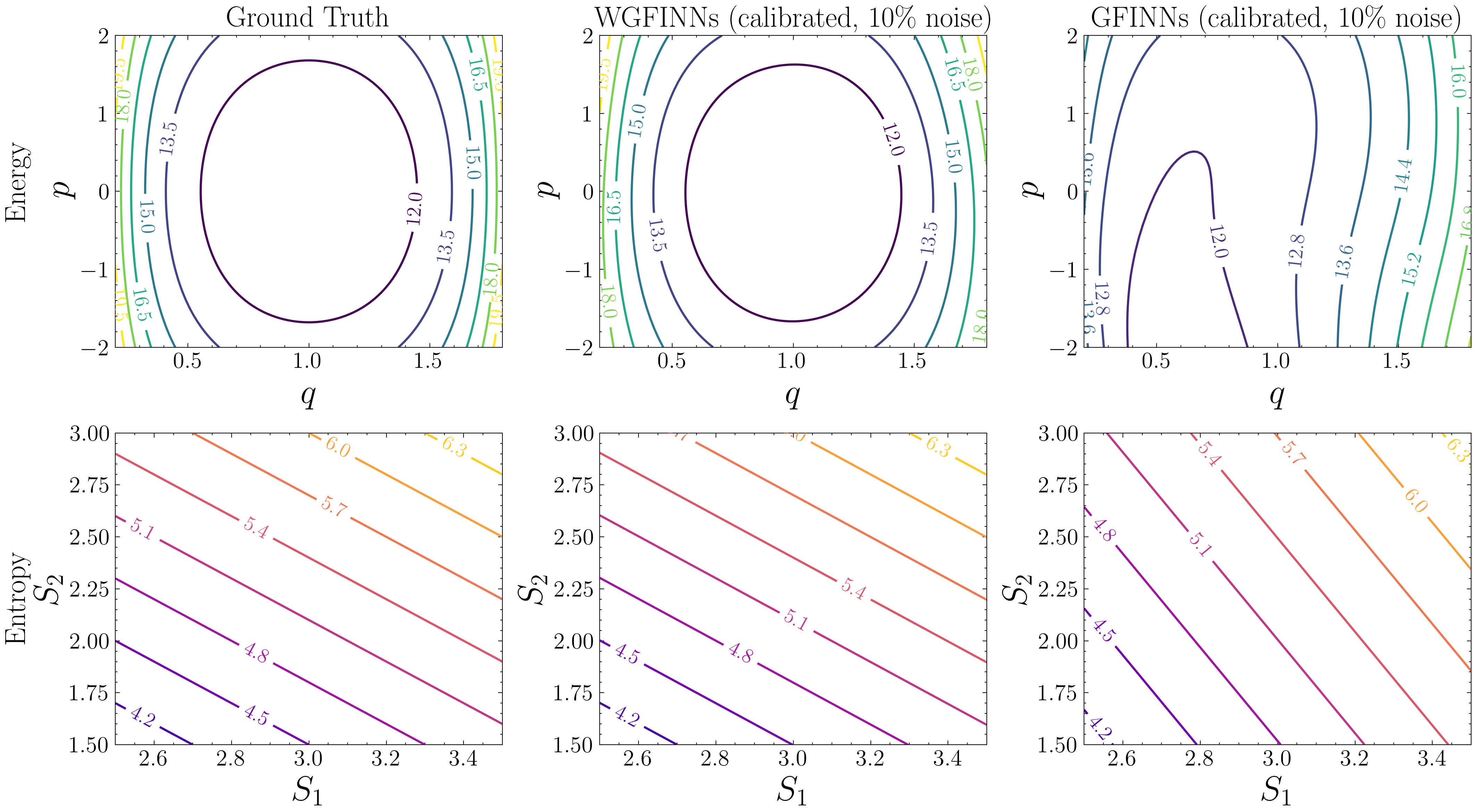}
 \hspace*{0.007\textwidth}
 \includegraphics[width=0.99\textwidth]{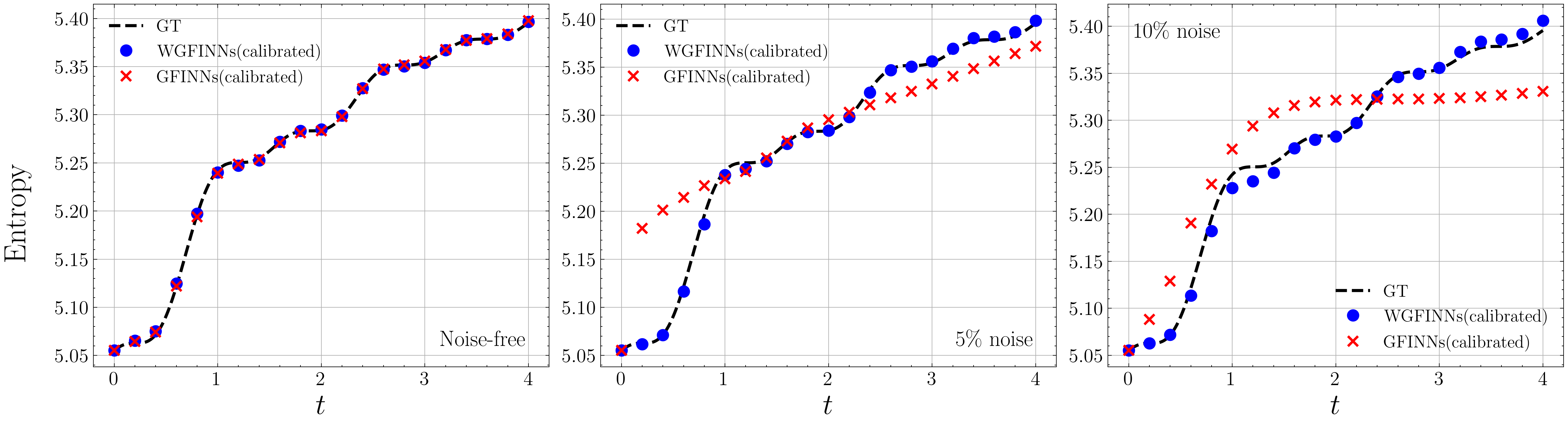}
 }
 \caption{Example \ref{sec:gas_containers}. Learned energy and entropy: 
(Top) Energy contours in the $q$-$p$ plane ($S_1 = S_2 = 2.5$ fixed). (Middle) Entropy contours in the $S_1$-$S_2$ plane ($q=1, p=0$ fixed). 
(Bottom) Calibrated entropy $S_{\mathrm{NN}}$ along a predicted trajectory for noise levels $0\%$, $5\%$, and $10\%$. 
All learned quantities are calibrated via least squares.}
\label{fig:GC_Entropy_Energy}
\end{figure}  

\subsection{Thermoelastic double pendulum} \label{sec:double_p}
We consider a two-dimensional thermoelastic double pendulum system \cite{romero2009thermodynamically, zhang2022gfinns}.
The state of the system is characterized by six variables:
the positions $\bfa{q}_1, \bfa{q}_2 \in \mathbb{R}^2$ and momenta $\bfa{p}_1, \bfa{p}_2 \in \mathbb{R}^2$
of the two masses, together with the entropies of the two springs,
denoted by $S_1$ and $S_2$, yielding a total state dimension of $d = 10$.
The detailed governing equations and system parameter settings can be found, e.g., in~\cite{zhang2022gfinns}.


Figure~\ref{fig:DP_noise_err} shows the relative $\ell_2$ test errors (\ref{eq:test_rmse}) of five independent simulations with respect to noise levels for both WGFINNs and GFINNs. 
As in the previous example, the test errors of both methods increase with noise level, and GFINNs incur substantially larger errors at high noise levels, again indicating the strong-form loss's sensitivity to noise. Yet, WGFINNs maintain consistently lower test errors across all noise regimes, demonstrating robust performance under data corruption.

\begin{figure}[ht]
 \centering
\includegraphics[scale=0.34]{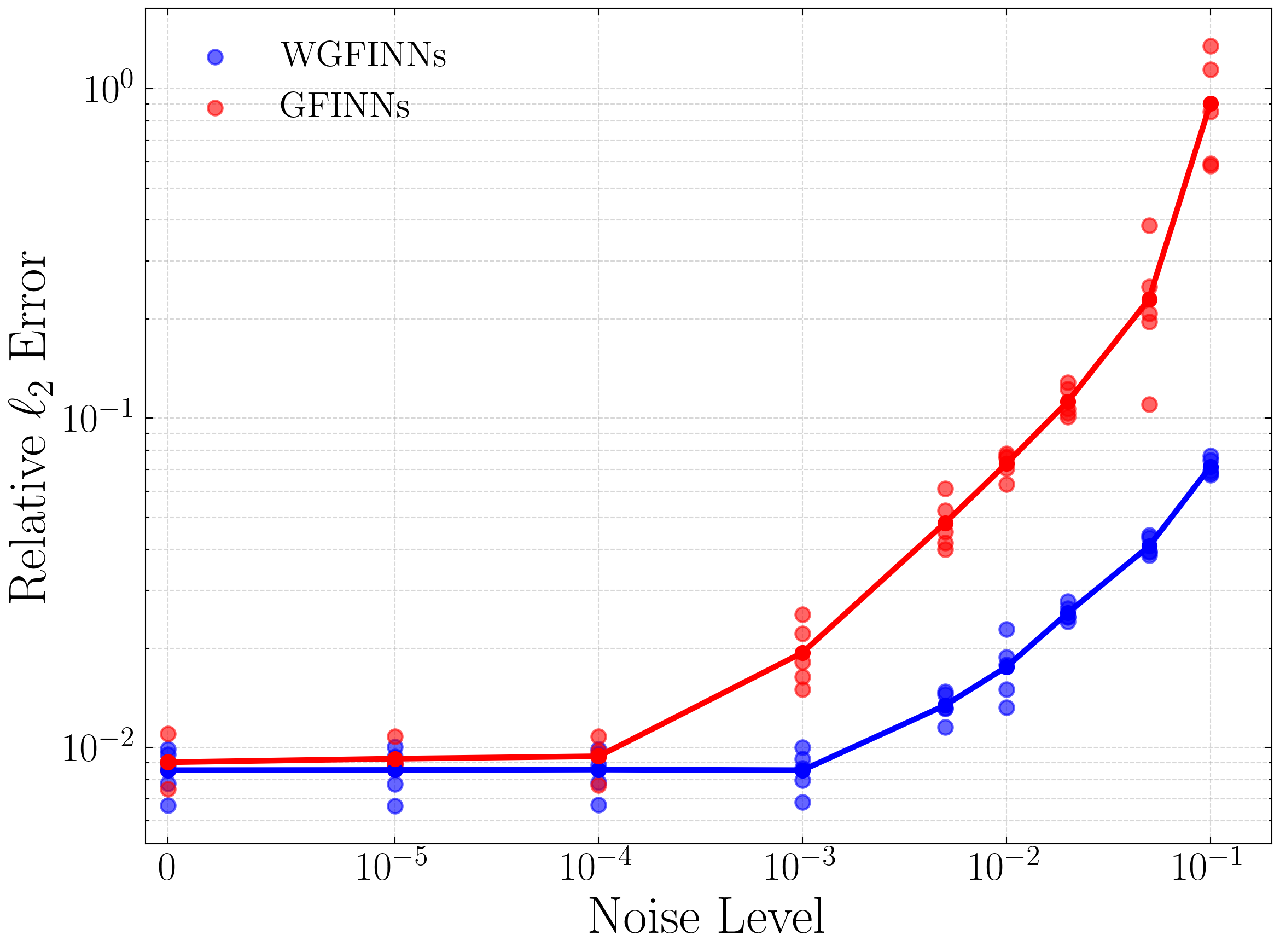}
\caption{Example \ref{sec:double_p}. The relative $\ell_2$ test errors (\ref{eq:test_rmse}) under varying noise levels. The solid line represents the mean across five independent simulations.}
\label{fig:DP_noise_err}
\end{figure}  

The trajectories of learned thermodynamics starting at a randomly chosen test initial state are shown in Figure~\ref{figure:DP_WGFINNs_vs_GFINNs_prediction}. The states of interest are the lengths of the springs, defined by
$\lambda_1 = \|\bfa{q}_1\|$ and $\lambda_2 = \|\bfa{q}_2 - \bfa{q}_1\|$, and the total entropy $S$ of the system, defined by $S=S_1 + S_2$.
Each row represents the results for a different noise level: $0\%$ (top), $5\%$ (middle), and $10\%$ (bottom).
It is again observed that, while both approaches closely follow the reference dynamics in the noise-free case, in the presence of noise, GFINNs' predictions deviate substantially from GT, whereas WGFINNs maintain close agreement throughout. 
Under $10\%$ noise, GFINNs are unable to capture any meaningful dynamics, while WGFINNs successfully track the 
reference dynamics.



  \begin{figure}[!ht]
	\centering
 {\includegraphics[width=\textwidth]{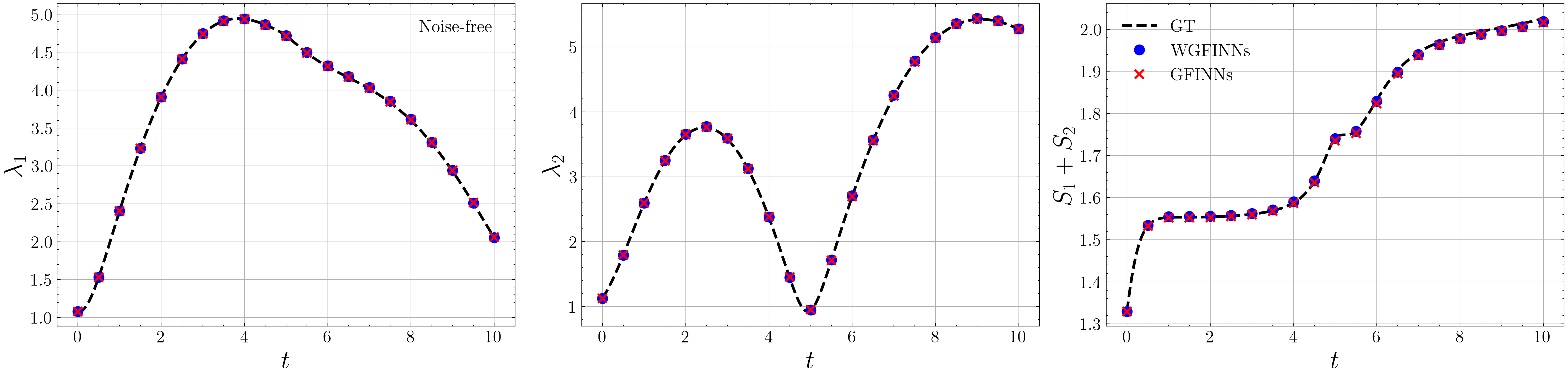}
 \includegraphics[width=\textwidth]{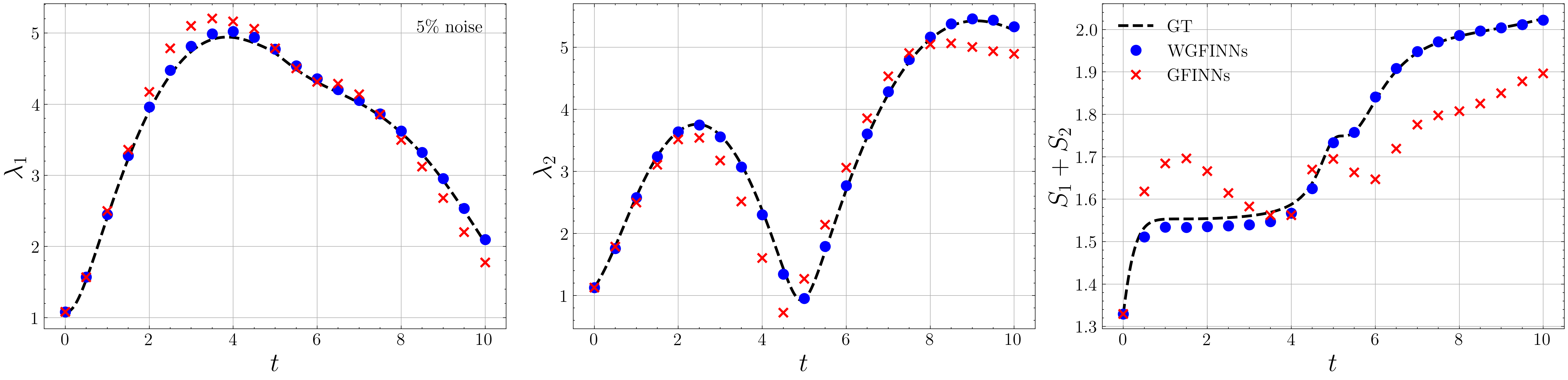}
 \includegraphics[width=\textwidth]{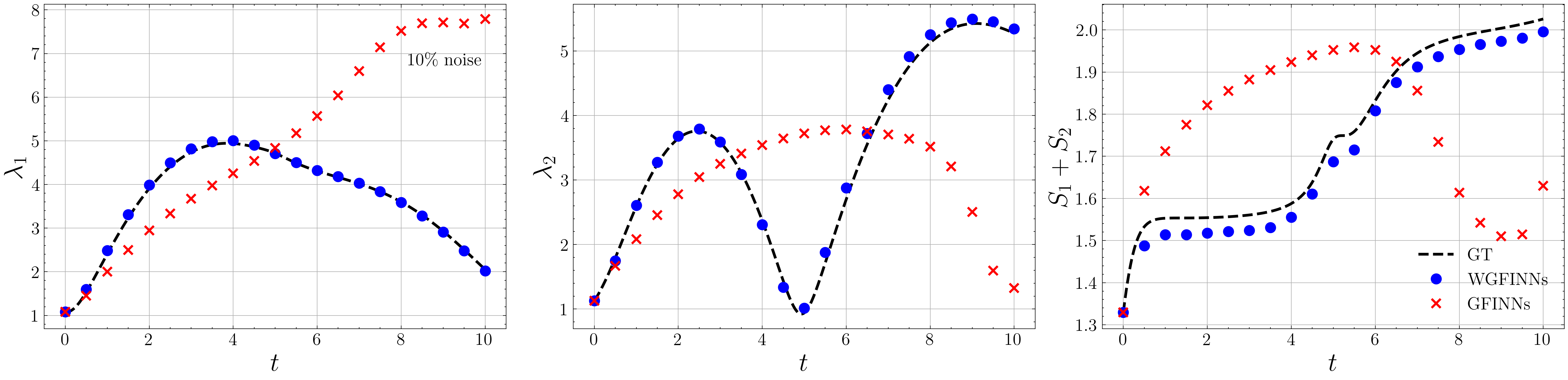}
 }
  \caption{Example \ref{sec:double_p}. Ground-truth (GT) trajectories and the corresponding predictions by WGFINNs and GFINNs.
From top to bottom: noise-free data, 5$\%$ noise, and 10$\%$ noise.
For each method, the results obtained from the model with the smallest training loss among five independent runs are shown.}
\label{figure:DP_WGFINNs_vs_GFINNs_prediction}
\end{figure}

\subsection{Linearly damped system}
\label{sec:linearly_damped}
Let us consider a linearly damped system \cite{shang2020structure} that exhibits a natural GENERIC framework characterized by the three state variables: the particle’s position \( q \), momentum \( p \), and entropy \( S \) of the surrounding thermal reservoir. 
The detailed governing equations and the system parameter choices can be found in~\cite{shang2020structure}.

\begin{figure}[ht]
 \centering
\includegraphics[scale=0.34]{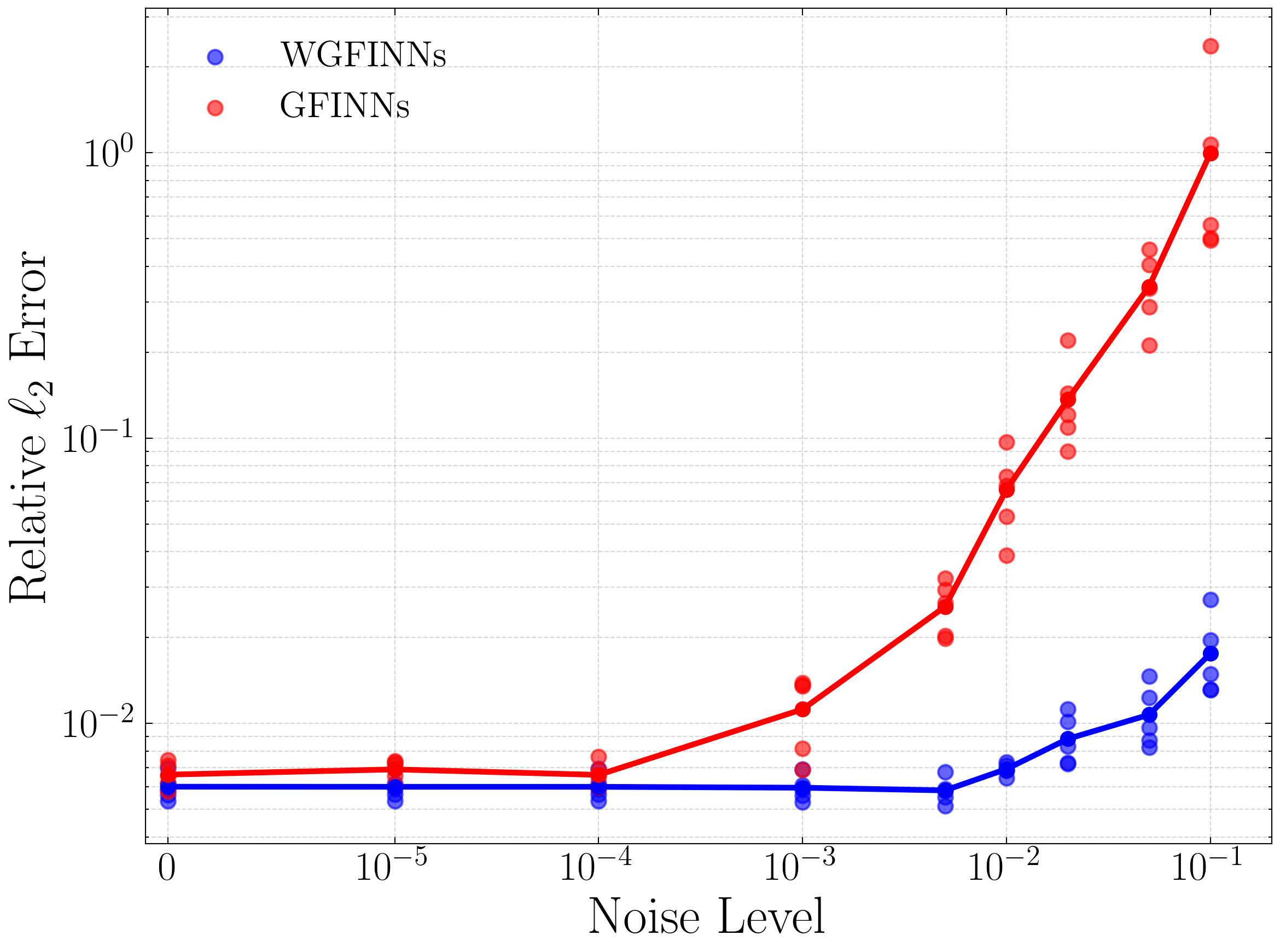}
\caption{Example \ref{sec:linearly_damped}. The relative $\ell_2$ test errors (\ref{eq:test_rmse}) under varying noise levels. The solid line represents the mean across five independent simulations.}
\label{fig:LD_noise_err}
\end{figure}  

Similar to the previous two examples, we present the relative $\ell_2$ test errors from five independent runs with respect to the noise levels for WGFINNs and GFINNs in Figure~\ref{fig:LD_noise_err}. 
Again, it is consistently observed that (1) the test errors of both methods increase as the noise level grows, (2) GFINNs produce significantly larger errors, especially at high noise levels, and (3) WGFINNs remain accurate across all noise regimes.
The trajectories by learned dynamics are shown in Figure~\ref{figure:LD_WGFINNs_vs_GFINNs_prediction}.
Again, both methods remain accurate in the absence of noise, while WGFINNs outperform GFINNs in the presence of noise.

  \begin{figure}[!ht]
	\centering
 {\includegraphics[width=\textwidth]{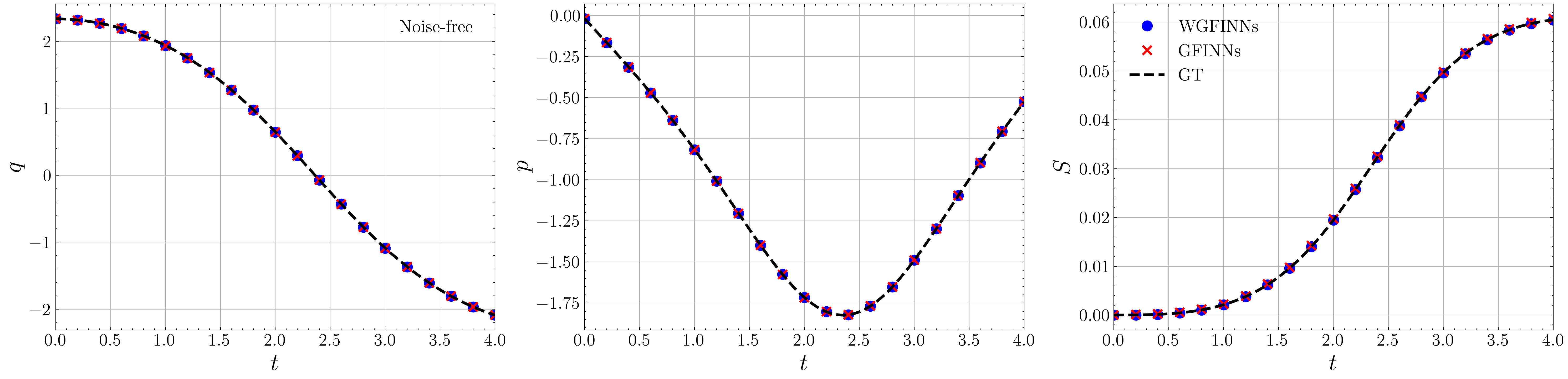}
 \includegraphics[width=\textwidth]{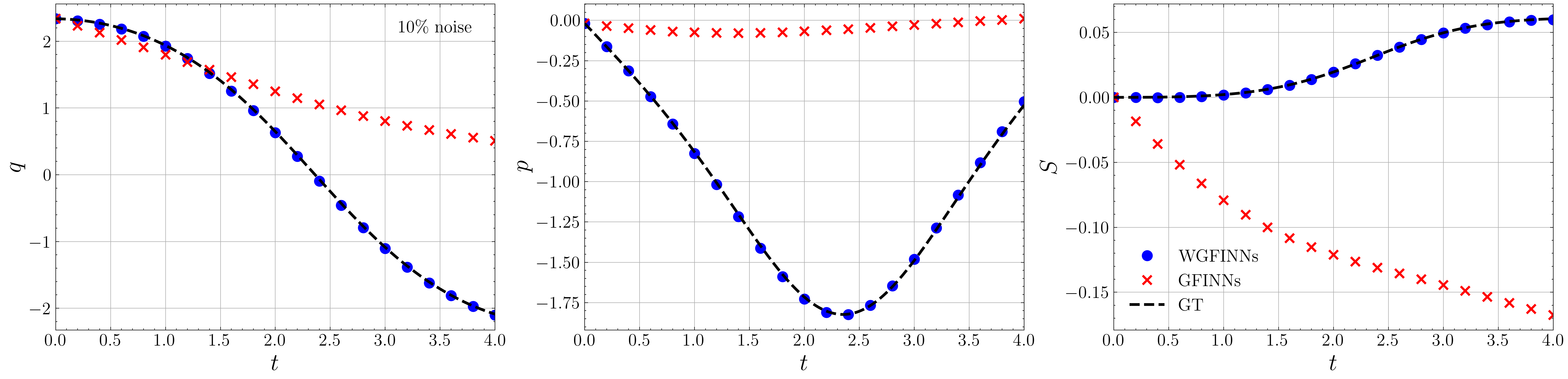}
 }
  \caption{Example \ref{sec:linearly_damped}. GT trajectories and the corresponding predictions by WGFINNs and GFINNs.
(Top) noise-free, i.e., 0$\%$ noise and (Bottom) 10$\%$ noise. See Figure~\ref{fig:LD_WGFINNs_vs_GFINNs_noise_example} for 5$\%$ noise.
For each method, the results obtained from the model with the smallest training loss among five independent runs are shown.}
\label{figure:LD_WGFINNs_vs_GFINNs_prediction}
\end{figure}


\subsection{1D/1V Vlasov–Poisson equation: electric field dynamics}
\label{sec:plasma}
Let us consider a parametric 1D/1V Vlasov--Poisson 
equation~\cite{he2024wglasdi}, which models the collisionless evolution of a plasma 
distribution function $f(x,v)$ under a self-consistent electrostatic potential $\Phi(t,x)$. We refer to~\cite{he2024wglasdi} for the full governing system defined over $t\in[8,10]$, $x\in [0,2\pi]$, $v\in[-7,7]$.
The governing dynamics depend on two physical parameters, the temperature $T$ 
and the wavenumber $k$, collected as 
$\boldsymbol{\mu} = (T, k) \in \mathcal{D} = [0.9, 1.1] \times [1.0, 1.2]$.
The electric field is given by $E(t,x) = -\partial \Phi(t,x) / \partial x$.
We consider the electric field evaluated at uniformly spaced spatial 
locations as state variables. Specifically, we define 
$E_i(t):= E(t, (2\pi/9)\, i)$ for $i = 1, \dots, 8$, and learn the dynamics 
of the resulting eight-dimensional state vector.


Figure~\ref{fig:PL_noise_err} presents the relative $\ell_2$ test errors of five independent simulations with respect to noise levels for WGFINNs and 
GFINNs. The two methods exhibit comparable performance at low noise levels, but GFINNs produce significantly larger errors as the noise level increases. Yet, WGFINNs remain accurate in high-noise regimes, again confirming their robustness to noise. 

\begin{figure}[ht]
 \centering
\includegraphics[scale=0.34]{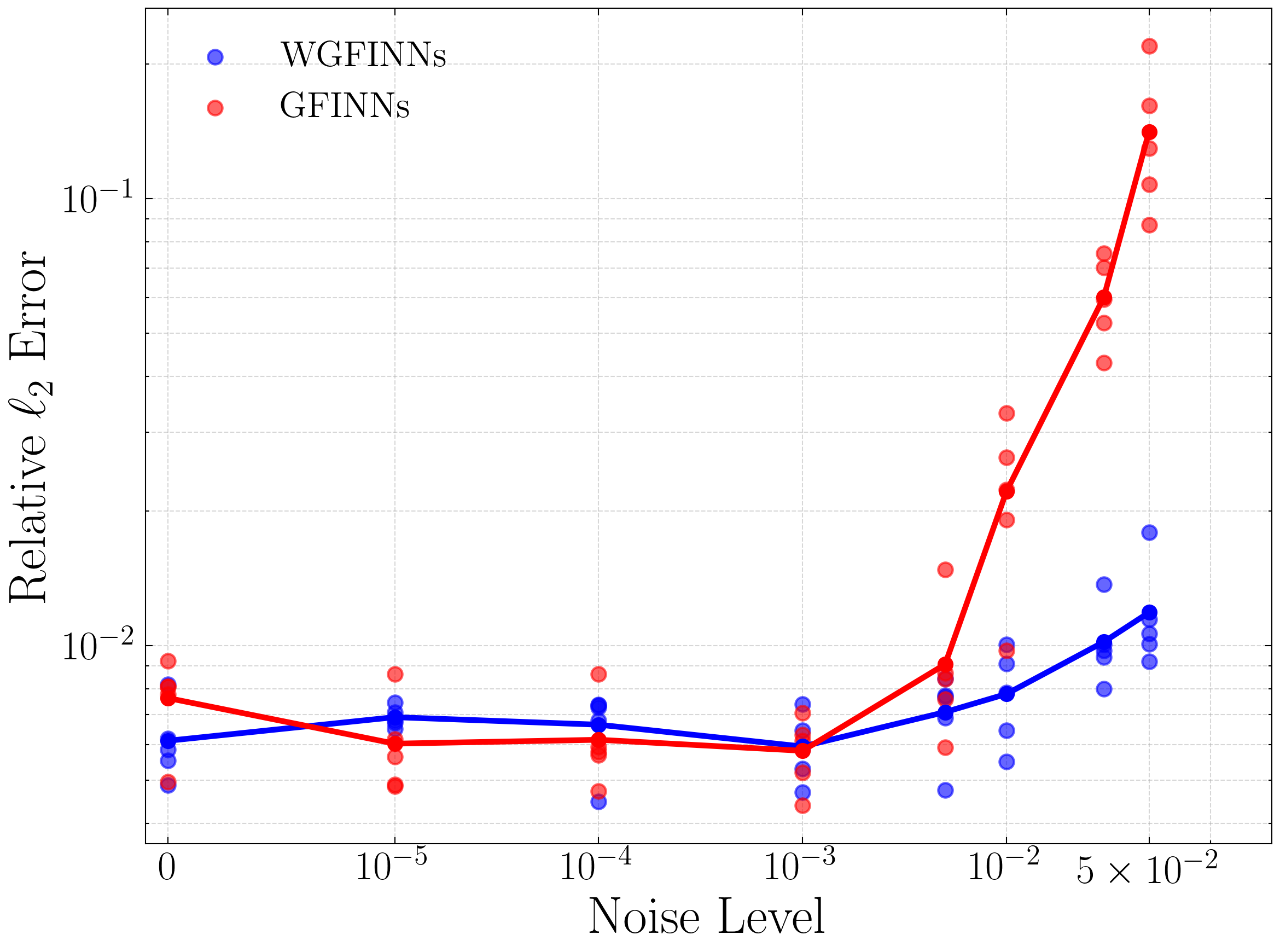}
\caption{Example \ref{sec:plasma}. The relative $\ell_2$ test errors (\ref{eq:test_rmse}) with respect to noise levels. The solid line represents the mean across five independent simulations.}
\label{fig:PL_noise_err}
\end{figure}  

Figure~\ref{figure:PL_WGFINNs_vs_GFINNs_prediction} depicts the predicted trajectories of $E_1$ and $E_2$ starting from a randomly selected initial state at varying noise levels. Again, it is seen that WGFINNs outperform GFINNs, especially in the presence of noise.



\begin{figure}[!t]
	\centering
 {\includegraphics[width=\textwidth]{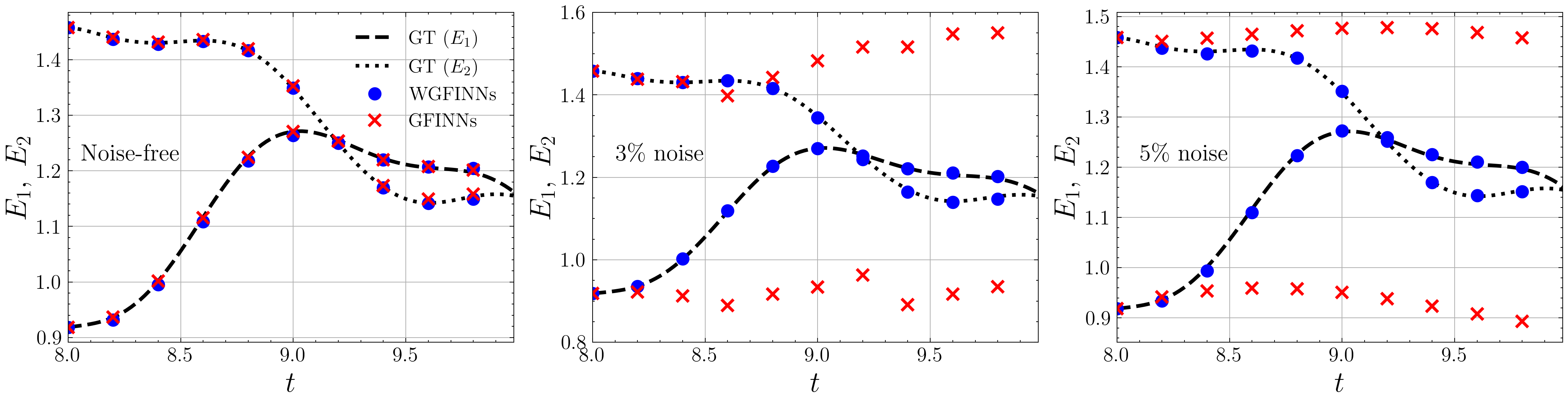}}
  \caption{Example \ref{sec:plasma}. GT states $E_1$, $E_2$, and the corresponding predictions by WGFINNs and GFINNs.
From left to right: $0\%$ noise (noise-free), $3\%$ noise, and $5\%$ noise.
For each method, the results obtained from the model with the median training loss among five independent runs are shown.}
\label{figure:PL_WGFINNs_vs_GFINNs_prediction}
\end{figure}
\section{Conclusion}
\label{sec:conclusion}

We developed a learning framework that enables robust learning in the presence of noise in training data. 
In particular, we focused on using the GENERIC formalism-informed NNs (GFINNs) to exploit its thermodynamic structures, and proposed the Weak formulation-based GFINNs, namely, WGFINNs, that integrate a weak-form-based loss function with GFINNs. 
The learning framework introduces (1) state-wise weighted norm and (2) state-wise RBA that offer an effective training strategy that facilitates balanced learning across state variables of differing scales. 
By focusing on univariate linear dynamics, we presented a theoretical analysis showing quantitative differences between the strong-form and weak-form estimators. A major finding is that the strong form will diverge as the time step decreases in the presence of noise, while the weak-form estimator can remain accurate provided that the test functions satisfy certain conditions. 


Throughout extensive numerical experiments, we demonstrated the effectiveness of WGFINNs, which consistently outperform GFINNs across all noise levels. Due to the inherited thermodynamic structures, while both WGFINNs and GFINNs yield learned energy and entropy functions at the end of learning, WGFINNs offer significantly accurate predictions while preserving the thermodynamic laws.
Future work includes extensions to shock hydrodynamics, chaotic fluid systems, and non-autonomous identification of open systems.

\section*{Acknowledgments}
J.\ R.\ Park was partially supported by the associate member program provided by Korea Institute for Advanced Study, and by a research grant (K2432211) from the College of Public Policy at Korea University. 
J.\ R.\ Park would like to thank Dr. Xiaolong He for his help on the data generation for the example of 1D/1V Vlasov-Poisson equation.
S. W. Cheung and Y. Choi were partially supported by 
the U.S. Department of Energy, Office of Science, 
Office of Advanced Scientific Computing Research,
Scientific Discovery through Advanced Computing (SciDAC) program through
the LEADS SciDAC Institute under Project Number SCW1933 at LLNL,
and the LLNL Laboratory Directed Research and Development (LDRD) Program 
under Project No. 24-ERD-035.
LLNL is operated by Lawrence Livermore National Security LLC, for the
U.S. Department of Energy, National Nuclear Security Administration
under Contract DE-AC52-07NA27344.
IM release number: LLNL-JRNL-2016230.
Y. Shin was partially supported by the National Science Foundation under the grant DMS-2513966 (Computational Mathematics Program) and the NRF grant funded by the Ministry of Science and ICT of Korea (RS-2023-00219980).

\begin{appendices}
\clearpage
\section{Proofs of parameter estimation analysis}\label{app:proofs}
In this appendix, we provide detailed proofs for the theoretical results 
presented in Section \ref{sec:analysis}. 
\subsection{Proof of Theorem~\ref{thm:strong-form}}\label{app:proof_thm41} 

Without loss of generality, let $T=1$.
Let $K$ be the largest integer less than or equal to $h^{-1}$ so that $hK \le 1$,
and consider $\bm{P}_K(z)=(x_0e^{jz})_{j=0}^{K-1}$.
Let $\bm{E}_{0,K} = (\sigma^{-1}\varepsilon_0^{(K)},\dots,\sigma^{-1}\varepsilon_{K-1}^{(K)})$ and
$\bm{E}_{1,K} = (\sigma^{-1}\varepsilon_1^{(K)},\dots,\sigma^{-1}\varepsilon_K^{(K)})$
be the standardized noise vectors.
Here the dependency on $K$ is explicitly shown
and $\varepsilon_j^{(K)}$'s are i.i.d. random variables for every $j$ and $K$.
It can be checked that $\bm{Y}_K(\lambda h) = \bm{P}_K(\lambda h) + \sigma \bm{E}_{0,K}$

Since the optimal solution can be explicitly written as 
$\theta^*_K = \frac{\sum_{i = 1}^{K} y_{i -1} (y_i - y_{i-1})}{\Delta t \sum_{i = 1}^{K} y_{i-1}^2}$, it can be checked that 
\begin{equation} \label{eqn:strong-form-solution}
    e_K(\lambda h):=\theta^*_K - \lambda = \mathcal{E}_{\lambda,h} + \frac{\sigma}{h}\frac{G_K(\lambda h)}{\|\bm{Y}_K(\lambda h)\|^2},
\end{equation}
where $\mathcal{E}_{\lambda,t}:= \frac{e^{\lambda t} - 1}{t}-\lambda$.
and $G_K(\lambda h):= \langle \bm{Y}_K(\lambda h), \bm{E}_{1,K} - e^{\lambda h}\bm{E}_{0,K}\rangle$.

For $\frac{1}{N+1} < t \le \frac{1}{N}$ where $N$ is a positive integer,
let us consider the continuous extension of the error: $e(t)= \mathcal{E}_{\lambda,t} + \sigma \frac{g(t)}{ty(t)}$ where 
\begin{align*}
    y(t) &= \|\bm{Y}_N(\frac{\lambda}{N})\|_2^2 + \frac{\frac{1}{N}-t}{\frac{1}{N}-\frac{1}{N+1}} \left[\|\bm{Y}_{N+1}(\frac{\lambda}{N+1})\|_2^2 - \|\bm{Y}_N(\frac{\lambda}{N})\|_2^2\right], \\
    g(t) &= G_N(\frac{\lambda}{N})+ \frac{\frac{1}{N}-t}{\frac{1}{N}-\frac{1}{N+1}} 
    \left[G_{N+1}(\frac{\lambda}{N+1}) - G_N(\frac{\lambda}{N}) \right].
\end{align*}
By the strong law of large numbers and $\|\bm{P}_K(\lambda h)\|_2^2 = x_0^2\frac{e^{2\lambda hK}-1}{e^{2\lambda h}-1}$, we have
\begin{align*}
    \lim_{t \to 0} t y(t) \overset{\text{a.s.}}{=} x_0^2 \frac{e^{2\lambda}-1}{2\lambda}+ \sigma^2,
    \quad 
    \lim_{t \to 0} t g(t) \overset{\text{a.s.}}{=} -\sigma, 
    \quad 
    \lim_{t \to 0} t e(t) \overset{\text{a.s.}}{=} -\frac{\sigma^2}{x_0^2 \frac{e^{2\lambda}-1}{2\lambda}+ \sigma^2} < 0. 
\end{align*}
Hence, $\lim_{t \to 0} |e(t)|\overset{\text{a.s.}}{=} \infty$.

Let $E_{\text{po}}$ be the event that there exists $t^+ > 0$ such that $e(t^+) > 0$.
Note that since $e^x-1 \ge x$ for all $x \in \mathbb{R}$ and the equality holds only when $x=0$, we have $t\mathcal{E}_{\lambda,t} > 0$ for all $t > 0$.
Hence, assuming $\text{Pr}(E_\text{po}) > 0$, it follows from 
the almost surely continuity of $e$ and the intermediate value theorem that 
$\text{Pr}(\exists t^* > 0 \ \text{such that} \ e(t^*) = 0 \ | E_\text{po}) = 1$.

\hfill $\square$

\subsection{ Proof of Theorem~\ref{thm:weak-form}}\label{app:proof_thm43} 

We first introduce the following lemma. 

\begin{lemma} \label{lemma:Taylor}
    Let $x(t) = x_0e^{\lambda t}$ with $x_k = x(kh)$. Then,
    \begin{align*}
        \frac{x_k - x_{-k}}{2kh} &= x'_0 + x_0\frac{e_\lambda(kh)}{kh}, \\
        \frac{x_k -x_0 + x_{-k}-x_0}{kh} &=\frac{2x_0}{\lambda} \frac{e'_\lambda(kh)}{kh},
    \end{align*}
    where $e_\lambda(t) = \sinh(\lambda t) - \lambda t$
    and $e'_\lambda(t) = \frac{d}{dt}e_\lambda(t)$.
\end{lemma}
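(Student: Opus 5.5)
The plan is a direct computation from the closed-form solution $x(t)=x_0e^{\lambda t}$, rewriting the exponentials through hyperbolic functions so that $e_\lambda$ and $e'_\lambda$ appear. No earlier result is needed beyond $\lambda\neq 0$, which is assumed throughout Section~\ref{sec:analysis}. There is no real obstacle; the only point to watch is the bookkeeping that separates the exact derivative $x'_0=\lambda x_0$ from the remainder.

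For the first identity, I will substitute $x_{\pm k}=x_0e^{\pm\lambda kh}$. This gives $x_k-x_{-k}=2x_0\sinh(\lambda kh)$, hence
\begin{equation*}
\frac{x_k-x_{-k}}{2kh}=x_0\frac{\sinh(\lambda kh)}{kh}.
\end{equation*}
I will then add and subtract $\lambda kh$ inside the hyperbolic sine:
\begin{equation*}
x_0\frac{\sinh(\lambda kh)}{kh}=x_0\frac{\lambda kh+e_\lambda(kh)}{kh}=\lambda x_0+x_0\frac{e_\lambda(kh)}{kh}.
\end{equation*}
Since $x'(t)=\lambda x(t)$, we have $x'_0=\lambda x_0$, and the first identity follows.

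For the second identity, I will use the same substitution. It gives $x_k+x_{-k}-2x_0=2x_0(\cosh(\lambda kh)-1)$. By the definition $e'_\lambda(t)=\lambda\cosh(\lambda t)-\lambda$, this equals $\frac{2x_0}{\lambda}e'_\lambda(kh)$; dividing by $\lambda$ here is where $\lambda\ne0$ is used. Dividing both sides by $kh$ yields the stated formula.

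Finally, I will check the derivative claim implicit in the notation, namely that $\frac{d}{dt}\bigl(\sinh(\lambda t)-\lambda t\bigr)=\lambda\cosh(\lambda t)-\lambda$. This confirms that the $e'_\lambda$ appearing in the lemma is consistent with the one used in condition~5 of Theorem~\ref{thm:weak-form}.
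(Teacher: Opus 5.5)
Your proof is correct, and it takes a more elementary route than the paper's. The paper does not substitute the exponentials into the stencils directly. It first writes each one-sided quotient via Taylor's theorem with integral remainder, $\frac{x(\pm kh)-x(0)}{\pm kh}=x'(0)+\frac{1}{\pm kh}\int_0^{\pm kh}x''(t)(\pm kh-t)\,dt$, and evaluates the remainder in closed form as $\frac{x_0}{\pm kh}(e^{\pm\lambda kh}-1\mp\lambda kh)$. It then takes the difference and the average of the forward and backward quotients to reach the $\cosh$ and $\sinh$ expressions.

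Because the remainder integral is evaluated exactly, the final algebra is the same as yours. What the paper's framing adds is interpretation: it presents $x_0 e_\lambda(kh)/kh$ and $\frac{2x_0}{\lambda}e'_\lambda(kh)/kh$ as exact truncation errors of the central-difference and second-difference stencils. That is how the lemma is invoked in the proof of Theorem~\ref{thm:weak-form}. The same computation, with the remainders left as integrals of $x''$, would also apply to a general smooth trajectory.

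Your substitution using $e^{z}-e^{-z}=2\sinh z$ and $e^{z}+e^{-z}=2\cosh z$ is shorter and needs no calculus beyond $x'_0=\lambda x_0$. It also shows plainly that $\lambda\neq 0$ is used only to rewrite $2x_0(\cosh(\lambda kh)-1)$ as $\frac{2x_0}{\lambda}e'_\lambda(kh)$.
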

\begin{proof}
    Since $x(t) = x_0e^{\lambda t}$,
    it follows from the Taylor's theorem that 
    \begin{align*}
        \frac{x(kh) - x(0)}{kh} &= x'(0) + \frac{1}{kh}\int_0^{kh} x''(t) (kh-t) dt \\
        &= x'(0) + \frac{\lambda^2}{kh} x_0 \int_0^{kh}  e^{\lambda t} (kh-t) dt \\
        &= x'(0) + \frac{x_0}{kh}(e^{\lambda kh}-1-\lambda kh).
    \end{align*}
    Therefore,
    \begin{align*}
        \frac{x(kh) - x(0)}{kh} - \frac{x(-kh) - x(0)}{-kh}
        &= \frac{x_0}{kh}(e^{\lambda kh}-1-\lambda kh) - \frac{x_0}{-kh}(e^{-\lambda kh}-1+\lambda kh) \\
        &= \frac{2x_0}{kh}(\frac{e^{\lambda kh}+e^{-\lambda kh}}{2}-1)
        = 2\lambda x_0 \frac{\cosh(\lambda kh)-1}{\lambda kh} 
    \end{align*}
    Similarly, 
    \begin{align*}
        \frac{x(kh)-x(-kh)}{2kh} &= x'(0) + \frac{x_0}{kh}(\frac{e^{\lambda kh}-e^{-\lambda kh}}{2}-\lambda kh) \\
        &= x'(0) + \lambda x_0\frac{\sinh(\lambda kh) - \lambda kh}{\lambda kh} 
        = x'(0) + x_0 \frac{E_\lambda(kh)}{kh},
    \end{align*}
    which completes the proof.  
\end{proof}

\textbf{Proof of Theorem~\ref{thm:weak-form}.}

For notational convenience, let $h = \Delta t$.
    Observe that 
    \begin{align*}
        \theta^*_\text{weak} = \frac{- \sum_{k=1}^{m} \psi_k' \left[2kh \left( \frac{x_k- x_{-k}}{2kh} \right) + (\varepsilon_k - \varepsilon_{-k})\right]}{\psi_0(x_0 + \varepsilon_0) + \sum_{k=1}^{m} \psi_k \left[kh \left( \frac{x_k -x_0}{kh} - \frac{x_{-k}-x_0}{-kh} \right) + 2x_0 + (\varepsilon_k + \varepsilon_{-k})\right]}.
    \end{align*}
    It follows from Lemma~\ref{lemma:Taylor} that 
    the exact truncation errors are given by 
    \begin{align*}
        \frac{x_k - x_{-k}}{2kh} = x'_0 + x_0\frac{e_\lambda(kh)}{kh}, \quad 
        \frac{x_k -x_0}{kh} - \frac{x_{-k}-x_0}{-kh} = \frac{2x_0}{\lambda}\frac{e'_\lambda(kh)}{kh}
    \end{align*}
    where $e_\lambda(t) = \sinh(\lambda t) - \lambda t$.
    It then can be checked that we have
    \begin{align*}
        \theta^*_\text{weak} = 
        -\frac{x'_0 \sum_{k=1}^{m} 2kh \psi_k' + 2x_0\sum_{k=1}^m \psi_k' e_\lambda(kh)+ \sum_{k=-m}^m \varepsilon_k\psi_k'}{x_0\sum_{k=-m}^m\psi_k + \frac{2x_0}{\lambda}\sum_{k=1}^{m} \psi_k e'_\lambda(kh) + \sum_{k=-m}^m \varepsilon_k\psi_k}.
    \end{align*}
    
    Suppose that $1 = \sum_{k=-m}^m \psi_k = -\sum_{k=1}^m 2kh\psi_k'$.
    Then, $\theta^*_\text{weak} = \lambda - R$ where 
    \begin{align*}
        R = \frac{2x_0\sum_{k=1}^m \psi_k'e_\lambda(kh) + 2x_0\sum_{k=1}^{m} \psi_k e'_\lambda(kh) +\sum_{k=-m}^m\varepsilon_k(\lambda \psi_k + \psi_k')
        }{x_0  + \frac{2x_0}{\lambda}\sum_{k=1}^{m} \psi_k e'_\lambda(kh)+ \sum_{k=-m}^m \varepsilon_k\psi_k}.
    \end{align*}
    By letting $\bm{E} = (\sigma^{-1}\varepsilon_k)_{k=-m}^m$
    and $\Psi_{a,b} = (a\psi_k + b\psi_k')_{k=-m}^m$, we have
    \begin{align*}
        R = \frac{\frac{2x_0}{\sigma} \sum_{k=1}^m (\psi_k' e_\lambda(kh) + \psi_k e'_\lambda(kh)) 
        + \lambda \langle \bm{E}, \Psi_{1,\lambda^{-1}}\rangle
        }{\frac{x_0}{\sigma}(1 + \frac{2}{\lambda}\sum_{k=1}^{m} \psi_ke'_\lambda(kh))+ \langle \bm{E}, \Psi_{1, 0}\rangle},
    \end{align*}
    which completes the proof.  \hfill $\square$

\subsection{ Proof of Theorem~\ref{prop:weak-form-L1}}\label{app:proof_prop45}
It follows from $-1 = \sum_{k=1}^m 2kh\psi_k'$ that $\phi_1'=-\frac{1}{Sw_1}$.
Also, since $\psi_1' e_\lambda(\Delta t) + \psi_1 e'_\lambda(\Delta t) = 0$, we have
$\phi_1 = \frac{e_\lambda(\Delta t)}{w_1Se'_\lambda(\Delta t)}$.
Lastly, since $\sum_{k=-m}^m \psi_k = 1$, we have 
$\phi_0 = \frac{1}{w_0}\left[ 1 - 2\frac{e_\lambda(\Delta t)}{S e'_\lambda(\Delta t)} \right]$.
Since the first and second conditions in Theorem~\ref{thm:weak-form} are met,
it can be checked that 
\begin{align*}
    \theta^*_\text{weak} = \lambda - \lambda \frac{\langle \bm{E},\Psi_{1,\lambda^{-1}}\rangle}{\frac{x_0}{\sigma}(1 +  \frac{1}{\Delta t \lambda}e_\lambda(\Delta t)) + \langle \bm{E},\Psi_{1,0}\rangle}.
\end{align*}
Since $\lim_{\Delta t \to \infty} \frac{e_\lambda(\Delta t)}{\Delta t \lambda} = \infty$,
we have 
$\lim_{\Delta t \to \infty} \theta^*_{\text{weak}}(\Delta t) \overset{\text{a.s.}}{=} \lambda.$

Let $\bm{X}(\Delta t)=\langle \bm{E},\Psi_{1,\lambda^{-1}}\rangle$ 
and $\bm{Y}(\Delta t)=\langle \bm{E},\Psi_{1,0}\rangle$.
Since $\bm{E}$ is Gaussian, so are $\bm{X}$ and $\bm{Y}$.
It can be checked that $\text{Var}(\bm{X}(\Delta t)) = V(\lambda \Delta t) + \frac{1}{2(\lambda \Delta t)^2}$ and $\text{Var}(\bm{Y}(\Delta t)) = V(\lambda \Delta t)$
where 
\begin{align*}
    V(z) = \left(1 - \frac{1}{z}\frac{\sinh(z)-z}{\cosh(z)-1}\right)^2 + \frac{1}{2}\left(\frac{1}{z}\frac{\sinh(z)-z}{\cosh(z)-1}\right)^2.
\end{align*}
Note that $\lim_{z \to \pm \infty} V(z) = 1$, $\lim_{z \to 0} V(z) = 0.5$
and $0.5 \le V(z) \le 1$ for all $z$.
Thus, $\lim_{\Delta t \to \infty} \text{Var}(\bm{X}(\Delta t)) = \lim_{\Delta t \to \infty} \text{Var}(\bm{Y}(\Delta t)) = 1$,
$\lim_{\Delta t \to 0} \text{Var}(\bm{X}(\Delta t)) = \infty$,
and 
$\lim_{\Delta t \to 0} \text{Var}(\bm{Y}(\Delta t)) = 0.5$.

Observe that there exists a Guassian random variable $\bm{U}$, independent of $\bm{Y}$ such that $\bm{X} = \beta \bm{Y} + \bm{U}$
where $\beta = \frac{\text{Cov}(\bm{X},\bm{Y})}{\text{Var}(\bm{Y})}$.
Thus, we have
\begin{align*}
    \bm{W}:=\frac{\theta^*_\text{weak} - \lambda}{-\lambda}
    =  \frac{\bm{X}}{f(\Delta t) + \bm{Y}}
    =\beta \frac{\bm{Y}}{f(\Delta t) + \bm{Y}}
    + \frac{\bm{U}}{f(\Delta t)+ \bm{Y}}.
\end{align*}
where $f(\Delta t)= \frac{x_0}{\sigma}(1 +  \frac{1}{\Delta t \lambda}e_\lambda(\Delta t))$, which converges to 0 as $\Delta t \to \frac{x_0}{\sigma}$.
Since $\text{Var}(\bm{X}) \to \infty$ and $\text{Var}(\bm{X}) = \beta^2 \text{Var}(\bm{Y}) + \text{Var}(\bm{U})$, it can be checked that both $\beta^2 \to \infty$ and $\text{Var}(\bm{U}) \to \infty$.

Since $\bm{W}|\bm{Y}\sim N\left(\frac{\beta \bm{Y}}{f(\Delta t)+\bm{Y}}, \frac{\text{Var}(\bm{U})}{(f(\Delta t)+\bm{Y})^2}\right)$,
for any $M$, observe that 
\begin{align*}
    \text{Pr}\left( |\bm{W}| \le M | \bm{Y}\right) \le \frac{2M}{\sqrt{2\pi}}\frac{|f(\Delta t)+\bm{Y}|}{\sqrt{\text{Var}(\bm{U})}} \overset{\text{p}}{\to} 0.
\end{align*}
Since $\text{Pr}\left( |\bm{W}| \le M\right) = \mathbb{E}_{\bm{Y}}\left[\text{Pr}\left( |\bm{W}| \le M | \bm{Y}\right)\right]$, it follows from the dominated convergence theorem that 
$\lim_{\Delta t \to 0} \text{Pr}\left( |\bm{W}| \le M\right) = 0$, which completes the proof.

\hfill $\square$

    
\section{Implementation details}
\label{appendix:training_details}
In this appendix, we describe the implementation details of WGFINNs and GFINNs for all numerical examples in Section \ref{sec:results}, including the generation of training data, hyperparameter choices, and the neural network architectures.

\subsection{Two gas containers exchanging heat and volume}
\label{sec:app_GC}
This section presents the implementation details of WGFINNs and GFINNs used in Section \ref{sec:gas_containers}. The implementation of GFINNs follows the descriptions in \cite{zhang2022gfinns} and the corresponding publicly available source\footnote{\label{ft:gfinns}GitHub page: \url{https://github.com/zzhang222/gfinn_gc}}. 
The Runge-Kutta second/third order integrator (RK23) is used for time-step integration of the system dynamics during the training of GFINNs and the prediction phase of both methods.
Training took approximately \(4{,}000\) seconds of wall-clock time, corresponding to about \(170\text{K}\) iterations for WGFINNs and \(120\text{K}\) iterations for GFINNs. 

{\bf Data generation}: 
We use total $100$ trajectories from the time interval $[0,4]$ with time step $\Delta t = 0.01$. The initial conditions are randomly uniformly sampled from $[0.2,1.8]\times[-1,1]\times[1,3]\times[1,3]$.
For $100$ different initial conditions, the training and test data are generated using RK23 with the implementation provided in the GFINNs repository\textsuperscript{\ref{ft:gfinns}}. From these trajectories, $80$ are randomly selected for training, while the remaining $20$ trajectories are used for testing.

{\bf Hyperparameters and NN architectures}: 
For all noise levels, hyperparameters for each method are tuned to optimize prediction accuracy. 
Table \ref{tab:Hyperparameters_GC} presents the set of hyperparameters for both WGFINNs and GFINNs across all noise levels. For WGFINNs, test function parameters ($\ell$: support width, $p$: polynomial degree, $s$: overlap) are specified for the weak-form formulation. For GFINNs, batch size controls the number of trajectories used per training iteration. 
The initial learning rates (LR) for all models are selected from $\{10^{-6}, 10^{-5}, 10^{-4} , 10^{-3}\}$, and proper decay rates, decay steps are selected accordingly. 
The RBA parameters $(\eta^*, \gamma)$ are selected from $\{ (0 , 1), (0.001 , 0.999), (0.005 , 0.995), (0.01 , 0.99), (0.05 , 0.95), (0.1, 0.9)\}$. The batch size for GFINNs is selected from $\{500, 1000, 2000, 3000, 4000, 5000, \text{Full batch}\}$. The hyperparameters $\ell$, $p$, $s$ for WGFINNs lead to total $93$ and $35$ test functions for noise-free, noisy data cases respectively. 
The AdamW weight decay parameter is set to $10^{-2}$ for GFINNs trained on noisy data, and $0$ otherwise.
The weight matrix in the weighted loss is set as $W = \text{diag}(\frac{1}{\sigma_1}, \frac{1}{\sigma_2}, \ldots, \frac{1}{\sigma_d})$ for both WGFINNs, GFINNs, where $\sigma_j$ is defined in Section~\ref{sec:sw_rba}. 
For both methods, the four neural networks $E_{\text{NN}}$, $S_{\text{NN}}$, $L_{\text{NN}}$, and $M_{\text{NN}}$ consist of $4$ layers with $20$ neurons per hidden layer and hyperbolic tangent activation functions. Both models consist of \(3{,}646\) trainable parameters.

\begin{table}[!t]
\centering
\begin{tabular}{|c||c|c|c|c|c|c|}
\hline
\multicolumn{7}{|c|}{\textbf{WGFINNs}} \\
\hline
Noise  & LR & Decay rate & Decay steps & $\ell/p/s$ & $\eta^*$ (RBA) & $\gamma$ (RBA) \\
\hline
$0\%$ & 1e-3   & 0.99 & 1000 & 30/6/0.9  & 0.005 & 0.995 \\
\hline
$5\%$ & 1e-3  & 0.25  & 15000 & 80/4/0.9 & 0.01  & 0.99 \\
\hline
$10\%$ & 1e-3  & 0.1 & 15000 & 80/4/0.9 & 0.005 & 0.995 \\
\hline\hline
\multicolumn{7}{|c|}{\textbf{GFINNs}} \\
\hline
Noise& LR & Decay rate & Decay steps & Batch size & $\eta^*$ (RBA) & $\gamma$ (RBA) \\
\hline
$0\%$ & 1e-3 & 1 & N/A & 1000 & 0.001  & 0.999  \\
\hline
$5\%$ & 1e-3 & 0.1 & 5000 & 1000 & 0.1 & 0.9 \\
\hline
$10\%$ &  1e-3 & 0.75 & 10000 & 1000 & 0.001  & 0.999  \\
\hline
\end{tabular}
\caption{Example \ref{sec:gas_containers}: Hyperparameters for WGFINNs and GFINNs at varying noise levels. The decay rate $\alpha$ means the multiplication by $\alpha$ is applied to the current learning rate.}
\label{tab:Hyperparameters_GC}
\end{table}

{\bf Calibration of learned energy ($E_{\text{NN}}$) and entropy ($S_{\text{NN}}$)}:
The GENERIC formalism allows infinitely many module combinations $(\tilde{E},\tilde{S},\tilde{L},\tilde{M})$ to produce identical dynamics for a given $(E,S,L,M)$. For instance, any affine transformation $\tilde{S} = cS + d$ with $\tilde{M} = c^{-1}M$ preserves the governing equation. This means the learned $E_{\text{NN}}$ and $S_{\text{NN}}$ may differ from ground truth by unknown scaling and translation. To facilitate meaningful comparison with the ground truth, we therefore apply an affine 
calibration $\hat{E} = a E_{\text{NN}} + b$ and $\hat{S} = c S_{\text{NN}} + d$ to the inferred quantities, where the coefficients $(a, b)$, $(c, d)$ are determined by least squares regression against the ground truth over a reference domain. For the entropy trajectory plots, we additionally anchor the 
calibration at $t = 0$ so that all models share the same initial value, with the scale $c$ optimally fitted to the remaining trajectory.

\subsection{Thermoelastic double pendulum}
\label{sec:app_DP}
This section describes the implementation details of WGFINNs and GFINNs employed in Section~\ref{sec:double_p}.
The GFINNs implementation follows the methodology presented in \cite{zhang2022gfinns}. 
The RK23 is used to perform time integration of the system dynamics during the training of GFINNs and prediction phase of both methods.
The training process lasted approximately \(4{,}000\) seconds of wall-clock time, which corresponds to around \(130\text{K}\) iterations for WGFINNs and \(90\text{K}\) iterations for GFINNs. 

{\bf Data generation}: 
We consider $100$ trajectories over time interval $[0,10]$ with time step $\Delta t = 0.025$. Following \cite{zhang2022gfinns}, the initial conditions are uniformly sampled from $[0.9,1.1]\times[-0.1,0.1]\times[2.1,2.3]\times[-0.1,0.1]\times[-0.1,0.1]\times[1.9,2.1]\times[0.9,1.1]\times[-0.1,0.1]\times[0.9,1.1]\times[0.1,0.3]$.
We use $80$ trajectories for training data and the remainder for testing.
The training and test data are generated using the RK23 integrator from the $100$ distinct initial conditions.

{\bf Hyperparameters and neural network architectures}:
For each noise level, the hyperparameters of WGFINNs and GFINNs are independently tuned to achieve the best predictive accuracy.
The selected hyperparameter configurations for all noise settings are summarized in Table~\ref{tab:Hyperparameters_DP}.
In WGFINNs, the weak-form formulation introduces additional test-function-related parameters, namely the support width $\ell$, polynomial degree $p$, and overlap ratio $s$.
For GFINNs, the batch size determines the number of trajectories used in each training iteration.
The initial learning rate is chosen from $\{10^{-6}, 10^{-5}, 10^{-4}, 10^{-3}\}$, with decay rates and decay steps selected accordingly.
The RBA parameters $(\eta^*, \gamma)$ are selected from
\begin{equation*}
\{ (0 , 1), (0.001 , 0.999), (0.005 , 0.995), (0.01 , 0.99), (0.05 , 0.95), (0.1, 0.9)\}.
\end{equation*}
For GFINNs, the batch size is chosen from
${500, 1000, 2000, 3000, 4000, 5000, \text{Full batch}}$.
The selected WGFINNs parameters $(\ell, p, s)$ result in a total of $131$ and $52$ test functions for noise-free and noisy data cases respectively.
For GFINNs trained on noisy data, the AdamW weight decay parameter is set to $10^{-2}$, while no weight decay is applied in other cases.
For both WGFINNs and GFINNs, the weighted loss employs the diagonal matrix
$W = \mathrm{diag}\left(\frac{1}{\sigma_1}, \frac{1}{\sigma_2}, \ldots, \frac{1}{\sigma_d}\right)$,
where $\sigma_j$ is defined in Section~\ref{sec:sw_rba}.
Both models use the same network architecture: the four neural networks $E_{\mathrm{NN}}$, $S_{\mathrm{NN}}$, $L_{\mathrm{NN}}$, and $M_{\mathrm{NN}}$ each consist of $5$ layers with $30$ neurons per hidden layer yielding total \(20{,}742\) NN parameters and hyperbolic tangent activation functions.

\begin{table}[!t]
\centering
\begin{tabular}{|c||c|c|c|c|c|c|}
\hline
\multicolumn{7}{|c|}{\textbf{WGFINNs}} \\
\hline
Noise  & LR & Decay rate & Decay steps & $\ell/p/s$ & $\eta^*$ (RBA) & $\gamma$ (RBA) \\
\hline
$0\%$ & 1e-3   & 0.99 & 1000 & 10/4/0.7  & 0.0 & 1 \\
\hline
$5\%$ & 1e-3  & 0.1 & 7500 & 40/2/0.95 & 0.01 &  0.99\\
\hline
$10\%$ & 1e-3  & 0.1 & 7500 & 40/2/0.95  & 0.05 & 0.95 \\
\hline\hline
\multicolumn{7}{|c|}{\textbf{GFINNs}} \\
\hline
Noise& LR & Decay rate & Decay steps & Batch size & $\eta^*$ (RBA) & $\gamma$ (RBA) \\
\hline
$0\%$ & 1e-3 & .99 & 1000 & 3000 & 0.001  & 0.999  \\
\hline
$5\%$ & 1e-3 & 0.1 & 10000 & 1000 & 0.001&  0.999 \\
\hline
$10\%$ & 1e-3 & 0.1 & 5000 & 1000 & 0.01  & 0.99  \\
\hline
\end{tabular}
\caption{Example \ref{sec:double_p}: Hyperparameters for WGFINNs and GFINNs at varying noise levels. The decay rate $\alpha$ means the multiplication by $\alpha$ is applied to the current learning rate.}
\label{tab:Hyperparameters_DP}
\end{table}

\subsection{Linearly damped system}
\label{sec:app_LD}
This section details the implementation of WGFINNs and GFINNs considered in Section~\ref{sec:linearly_damped}.
The system dynamics are integrated in time using the RK23 scheme for the training of GFINNs and prediction of both methods.
Training was carried out for approximately \(4{,}000\) seconds in wall-clock time, corresponding to about \(150\text{K}\) iterations for WGFINNs and \(120\text{K}\) iterations for GFINNs.

{\bf Data generation}: 
We generate $100$ trajectories on the time interval $[0,4]$ using a time step of $\Delta t = 0.01$.  
The initial states are randomly drawn from the domain $[1.5,2.5]\times[-0.2,0.2]\times\{0\}$ using uniform distribution.  
Among these, $80$ trajectories are used for training, while the remaining ones are used for evaluation.
The datasets for training and testing are obtained by integrating the system from $100$ different initial conditions using the RK23 integrator 
referring to the source code in \footnote{\label{ft:ld_github}GitHub page: \url{https://github.com/sxconly/GENERIC-Integrator}} associated with \cite{shang2020structure}.

{\bf Hyperparameters and neural network architectures}:
For each noise level, the hyperparameters of WGFINNs and GFINNs are tuned separately to optimize predictive accuracy.
The resulting hyperparameter choices for all noise settings are reported in Table~\ref{tab:Hyperparameters_LD}.
In WGFINNs, the weak-form formulation introduces additional parameters associated with the test functions, including the support width $\ell$, the polynomial degree $p$, and the overlap ratio $s$.
In GFINNs, the batch size specifies the number of trajectories processed in each training iteration.
The initial learning rate is selected from $\{10^{-6}, 10^{-5}, 10^{-4}, 10^{-3}\}$, with appropriate decay rates and decay steps applied.
The RBA parameters $(\eta^*, \gamma)$ are chosen from
\begin{equation*}
\{ (0 , 1), (0.001 , 0.999), (0.005 , 0.995), (0.01 , 0.99), (0.05 , 0.95), (0.1, 0.9)\}.
\end{equation*}
For GFINNs, the batch size is selected from
${500, 1000, 2000, 3000, 4000, 5000, \text{Full batch}}$.
The selected WGFINNs parameter set $(\ell, p, s)$ yields a total of $25$ test functions.
When training GFINNs on noisy data, the AdamW optimizer is used with a weight decay coefficient of $10^{-2}$, whereas no weight decay is applied in the noise-free setting.
For both WGFINNs and GFINNs, the weighted loss function employs the diagonal matrix
$W = \mathrm{diag}\left(\frac{1}{\sigma_1}, \frac{1}{\sigma_2}, \ldots, \frac{1}{\sigma_d}\right)$,
where $\sigma_j$ is defined in Section~\ref{sec:sw_rba}.
Both models adopt an identical network architecture: each of the four neural networks $E_{\mathrm{NN}}$, $S_{\mathrm{NN}}$, $L_{\mathrm{NN}}$, and $M_{\mathrm{NN}}$ consists of $5$ layers with $50$ neurons per hidden layer that correspond to \(24{,}577\) NN parameters, and hyperbolic tangent activation functions.

\begin{table}[!t]
\centering
\begin{tabular}{|c||c|c|c|c|c|c|}
\hline
\multicolumn{7}{|c|}{\textbf{WGFINNs}} \\
\hline
Noise  & LR & Decay rate & Decay steps & $\ell/p/s$ & $\eta^*$ (RBA) & $\gamma$ (RBA) \\
\hline
$0\%$ & 1e-3   & 0.95 & 1000 & 60/4/0.8  & 0.0 & 1.0\\
\hline
$5\%$ & 1e-4  & 0.95 & 1000 &  60/4/0.8  & 0.0 &  1.0\\
\hline
$10\%$ & 1e-4  & 0.85 & 1000 &  60/4/0.8  & 0.0 & 1.0\\
\hline\hline
\multicolumn{7}{|c|}{\textbf{GFINNs}} \\
\hline
Noise& LR & Decay rate & Decay steps & Batch size & $\eta^*$ (RBA) & $\gamma$ (RBA) \\
\hline
$0\%$ & 1e-3 & .95 & 1000 & 3000 & 0.0  & 1.0 \\
\hline
$5\%$ & 1e-5 & 0.1 & 1000 & 3000 & 0.0 &  1.0 \\
\hline
$10\%$ & 1e-5 & 0.1 & 1000 & 3000 & 0.0  & 1.0 \\
\hline
\end{tabular}
\caption{Example \ref{sec:linearly_damped}: Hyperparameters for WGFINNs and GFINNs at varying noise levels. The decay rate $\alpha$ means the multiplication by $\alpha$ is applied to the current learning rate.}
\label{tab:Hyperparameters_LD}
\end{table}

\subsection{1D-1V Vlasov Equation: electric field dynamics}
\label{sec:app_plasma}
This section presents the implementation details of WGFINNs and GFINNs for Section~\ref{sec:plasma}. The system dynamics are integrated in time using the RK23 scheme during training of GFINNs and the prediction stage of both models.
The models were trained for approximately $4{,}000$ seconds of wall-clock time, which corresponds to approximately $240\text{K}$ and $160\text{K}$ iterations for WGFINNs and GFINNs, respectively.

{\bf Data generation}: 
The dataset consists of $440$ trajectories from uniformly spaced parameter values $\boldsymbol{\mu}$ over the time interval $[8,10]$ with 
time step $\Delta t = 0.02$. Among these, $429$ trajectories are used for training, and the remaining $11$ are held out for testing.
The training and test data are generated using the HyPar solver \footnote{\label{ft:hypar}GitHub page: \url{https://hypar.github.io/}} with a WENO spatial
discretization and the fourth-order Runge-Kutta explicit time integration ($\Delta t = 0.005$, $t \in [0,10]$). The electric field is then computed by solving the Poisson equation in a periodic domain using Fourier transforms. From this original dataset, the training and test data are subsampled so that $\Delta t = 0.02$, $T = [8, 10]$.

{\bf Hyperparameters and neural network architectures}:
The hyperparameters of WGFINNs and GFINNs are independently tuned for each 
noise level to achieve optimal predictive performance, and the selected 
configurations are summarized in Table~\ref{tab:Hyperparameters_PL}.
The initial learning rate is chosen from $\{10^{-6}, 10^{-5}, 10^{-4}, 10^{-3}\}$, 
with decay rates and decay steps selected accordingly.
The RBA parameters $(\eta^*, \gamma)$ are selected from
\begin{equation*}
\{ (0 , 1), (0.001 , 0.999), (0.005 , 0.995), (0.01 , 0.99), (0.05 , 0.95), (0.1, 0.9)\}.
\end{equation*}
For GFINNs, the batch size is chosen from 
$\{500, 1000, 2000, 3000, 4000, 5000, \text{Full batch}\}$.
The hyperparameters $\ell$, $p$, $s$ for WGFINNs yields total $45$ and $20$ test functions for noise-free and noisy data cases respectively.
A weight decay coefficient of $10^{-2}$ is applied to the AdamW optimizer when 
training WGFINNs on $5\%$ noisy data, while no weight decay is used in the noise-free case.
The weighted loss function for both methods uses the diagonal matrix
$W = \mathrm{diag}\left(\frac{1}{\sigma_1}, \frac{1}{\sigma_2}, \ldots, \frac{1}{\sigma_d}\right)$,
where $\sigma_j$ is defined in Section~\ref{sec:sw_rba}.
Both WGFINNs and GFINNs share an identical network architecture, where each of 
the four neural networks $E_{\mathrm{NN}}$, $S_{\mathrm{NN}}$, $L_{\mathrm{NN}}$, 
and $M_{\mathrm{NN}}$ comprises $5$ layers with $80$ neurons per hidden 
layer and hyperbolic tangent activation functions. Both methods consist of $92{,}194$ trainable parameters.


\begin{table}[t]
\centering
\begin{tabular}{|c||c|c|c|c|c|c|}
\hline
\multicolumn{7}{|c|}{\textbf{WGFINNs}} \\
\hline
Noise  & LR & Decay rate & Decay steps & $\ell/p/s$ & $\eta^*$ (RBA) & $\gamma$ (RBA) \\
\hline
$0\%$ & 1e-3   & 0.98 & 500 & 10/3/0.9  & 0.0 & 1.0\\
\hline
$3\%$ & 1e-3  & 0.95 & 1000 &  20/3/0.9  & 0.005 &  0.995\\
\hline
$5\%$ & 1e-3  & 0.95 & 1000 &  20/3/0.9  & 0.05 & 0.95\\
\hline\hline
\multicolumn{7}{|c|}{\textbf{GFINNs}} \\
\hline
Noise& LR & Decay rate & Decay steps & Batch size & $\eta^*$ (RBA) & $\gamma$ (RBA) \\
\hline
$0\%$ & 1e-3 & 0.99 & 1000 & 5000 & 0.0  & 1.0 \\
\hline
$3\%$ & 1e-3 & 0.99 & 1000 & 5000 & 0.001 &  0.999 \\
\hline
$5\%$ & 1e-4 & 0.95 & 1000 & 2000 & 0.001  & 0.999 \\
\hline
\end{tabular}
\caption{Example \ref{sec:plasma}: Hyperparameters for WGFINNs and GFINNs at varying noise levels. The decay rate $\alpha$ means the multiplication by $\alpha$ is applied to the current learning rate.}
\label{tab:Hyperparameters_PL}
\end{table}
\end{appendices}


\clearpage

\bibliographystyle{siamplain}

\bibliography{referencesWGFINNs}



\end{document}